\let\originalleft\left
\let\originalright\right
\renewcommand{\left}{\mathopen{}\mathclose\bgroup\originalleft}
\renewcommand{\right}{\aftergroup\egroup\originalright}
\newtheorem{theorem}{Theorem}[section]
\newtheorem{example}[theorem]{Example}
\newtheorem{lemma}[theorem]{Lemma}
\newtheorem{definition}[theorem]{Definition}
\newtheorem{remark}[theorem]{Remark}
\newcommand{\paren}[1]{\left(#1\right)}
\newcommand{\brac}[1]{\left[#1\right]}
\newcommand{\inner}[1]{\left\langle#1\right\rangle}
\newcommand{\norm}[1]{\left\|#1\right\|}
\newcommand{\set}[1]{\left\{#1\right\}}
\newcommand{\abs}[1]{\left\lvert #1 \right\rvert}
\newcommand{\wt}[1]{\widetilde{#1}}
\def \Exp {\mathbb{E}}
\def \Med {\mathrm{Med}}
\def \Moy {\mathrm{Mean}}
\newcommand{\eps}{\varepsilon}
\def \trace{\operatorname{Tr}}
\def \RR {\mathbb{R}}
\def \CC {\mathbb{C}}
\def \Simplex {\mathbb{S}}
\def \SketchingOperator {\mathrm{\Phi}}
\def \SketchingOperatorProb {\mathcal{A}}
\def \nMeasures {m}
\def\nSamples{n}
\def\sampleDim{d}
\def\sample{x}
\def\Sample{X}
\def\dataset{\mathbf{X}}
\def \Prob {\pi}
\def \mProb {{\tau}}
\def \empProb{\hat{\Prob}_{\nSamples}}
\def \Param {\theta}
\def \HH {\mu}
\def \estProb{\widetilde{\Prob}}
\def \FSMSpace {\mathfrak{M}}
\def \SampleSpace{\mathcal{Z}}
\def \FClass {\mathcal{F}}
\def \GClass {\mathcal{G}}
\def \Model {\mathfrak{S}}
\def \ModelML {\Model^{\mathtt{ML}}}
\def \ModelCT {\Model^{\mathtt{CT}}}
\newcommand{\sep}{\eps}
\def\secant{\mathcal{S}}
\def \distIOP {d^{\circ}}
\def \distIOPgen {\mathcal{D}^*}
\def \distIOPexgen {d}
\newcommand\normfclass[2]{\norm{#1}_{\mathcal{#2}}}
\newcommand\normlossf[2]{\norm{#1}_{\LossClass(\HypClass_{#2})}}
\newcommand\normloss[2]{\norm{#1}_{\LossClass}}
\newcommand\dnormloss[2]{\norm{#1}_{\Delta\LossClass}}
\newcommand\normdloss[2]{\norm{#1}_{\DLossClass(\HypClass_{#2})}}
\newcommand\normmah[2]{\norm{#1}_{#2}}
\newcommand\normkern[1]{\norm{#1}_{\kernel}}
\newcommand{\normTV}[1]{\norm{#1}_{\textnormal{TV}}}
\newcommand\KLdiv[2] {\textnormal{KL}(#1\|#2)}
\newcommand{\distp}{d}
\newcommand{\dX}{d_{\mathcal{X}}}
\newcommand{\dY}{d_{\mathcal{Y}}}
\def \loss {\ell}
\def \LossClass {\mathcal{L}}
\def \DLossClass {\Delta\mathcal{L}}
\def \hyp {h}
\def \HypClass{\mathcal{H}}
\def \LipClass {\mathrm{Lip}}
\def \Risk {\mathcal{R}}
\def \proxyRisk {R}
\def \Decoder {\Delta}
\def \Sketch {\mathtt{Sketch}}
\def \Learn {\mathtt{Learn}}
\def \ConcFn {c_{\kernel}}
\def \covnumsymbol {\mathrm{N}}
\newcommand{\covnum}[3]{\covnumsymbol\left(#1,#2,#3\right)}
\def \coveps {\delta}
\def \probLevel {\zeta}
\newcommand{\drisk}{\Delta \Risk}
\newcommand{\divp}{D}
\newcommand{\divg}{d}
\newcommand{\projspace}{\mathfrak{P}}
\def \kernel {\kappa}
\def \freqdist {\Lambda}
\def \freq {\omega}
\def \freqSpace{\Omega}
\def \feat {\phi}
\def \rfeat {{\feat_{\omega}}}
\def \rfeatj {{\feat_{\omega_j}}}
\def \vx {\mathbf{x}}
\def \vy {\mathbf{y}}
\def \vu {\mathbf{u}}
\def \mSigma {{\boldsymbol\Sigma}}
\def \mA {\mathbf{A}}
\def \mX {\mathbf{X}}
\def \mL {\mathbf{L}}
\def \mI {\mathbf{I}}
\def \Ball {\mathcal{B}}
\def \covar {\mSigma}
\def \Cov {\mathbf{\Sigma}}
\def \eigv {\lambda}
\def \argmin {\mathrm{argmin}}
\definecolor{darkpurple}{rgb}{0.3,0,0.3}
\def \PCAdim {k}
\newcommand{\kPCArisk}{\Risk_{k-\mathtt{PCA}}}
\newcommand{\projP}{\mathbf{P}}
\newcommand{\optproj}{\projP^{*[\PCAdim]}_{\Prob}}
\newcommand{\optprojl}{\projP^{*[\ell]}_{\Prob}} 
 \newcommand\rev[1]{\textcolor{blue}{#1}}
 \newcommand\rev[1]{{#1}}
\def\citeparttwo{\citep{gribonval:CSL2}}
\begin{document}

\title{Compressive Statistical Learning\protect\\ with Random Feature Moments}
\author{
R{\'e}mi Gribonval\thanks{Univ Lyon, Inria, CNRS, ENS de Lyon, UCB Lyon 1, LIP UMR 5668, F-69342, Lyon, France\protect\\
  This work was initiated while R. Gribonval, N. Keriven and Y. Traonmilin were with Univ Rennes, Inria, CNRS, IRISA\protect\\ F-35000 Rennes, France; and while G. Blanchard
was with the University of Potsdam, Germany.} 
\qquad \hfill remi.gribonval@inria.fr\\
Gilles Blanchard 
\thanks{Universit\'e Paris-Saclay, CNRS,  Inria, Laboratoire de math\'ematiques d'Orsay,
  F-91405, Orsay, France.}
\qquad \hfill gilles.blanchard@universite-paris-saclay.fr\\
Nicolas Keriven\thanks{CNRS, GIPSA-lab, UMR 5216, F-38400 Saint-Martin-d'H\`eres, France.}
\hfill 
nicolas.keriven@gipsa-lab.grenoble-inp.fr\\
Yann Traonmilin\thanks{CNRS, Univ. Bordeaux, Bordeaux INP,  IMB, UMR 5251, F-33400 Talence, France.}
\hfill
yann.traonmilin@math.u-bordeaux.fr
 }

\maketitle

\begin{abstract}
We describe a general framework --{\em compressive statistical learning}-- for resource-efficient large-scale learning: the training collection is compressed in one pass into a low-dimensional {\em sketch} (a vector of random empirical generalized moments) that captures the information relevant to the considered learning task. A near-minimizer of the risk is computed from the sketch through the solution of a nonlinear least squares problem. We investigate sufficient sketch sizes to control the generalization error of this procedure. The framework is illustrated on compressive PCA, compressive clustering, and compressive Gaussian mixture Modeling with fixed known variance. The latter two are further developed in a companion paper.
\end{abstract}

{\bf Keywords:}  Kernel mean embedding, random features, random moments, statistical learning, dimension reduction

\section{Introduction}

Large-scale machine learning faces a number of fundamental computational challenges, triggered both by the high dimensionality of modern data and the increasing availability of very large training collections. Besides the need to cope with high-dimensional features extracted from images, volumetric data, etc., 
a key challenge is to develop techniques able to fully leverage the information content and learning opportunities opened by large training collections of millions to billions or more items, with controlled computational resources. 

Such training volumes can severely challenge traditional statistical learning paradigms based on batch empirical risk minimization. Statistical learning offers a standardized setting where learning problems are expressed as the optimization of an expected loss, or risk, $\Risk(\Prob,\hyp) := \Exp_{\Sample\sim\Prob}{\loss(\Sample,\hyp)}$ over a parameterized family of hypotheses $\HypClass$ (where $\Prob$ is the probability distribution of the training collection).
This risk is empirically estimated on a training collection, and parameters that empirically minimize it are seeked, possibly with some regularization. Empirical minimization typically requires access to the whole training collection, either in batch mode or iteratively with one or more passes of stochastic gradient. This can become prohibitively costly when the collection is large and each iteration has non-negligible cost. An alternative is to sub-sample the collection, but this may come at the price of neglecting some important items from the collection. Besides online learning \citep[e.g.][]{Mairal2010}, sampling techniques such as coresets \citep{Feldman2011} or Nystr{\"o}m's method \citep[e.g.][]{Rudi:2015ud} have emerged to circumvent computational bottlenecks and preserve the ability to exploit latent information from large collections. 

Can we design an \rev{alternative} learning framework, with the ability to compress the training collection before even starting to learn? We advocate a possible route, {\em compressive statistical learning}, which is inspired by the notion of \emph{sketching} and is endowed with favorable computational features especially in the context of the streaming and distributed data model \citep{Cormode2011} (see Section \ref{sec:related_work}). Rooted both in the generalized method of moments \citep{Hall2005} and in compressive sensing \citep{FouRau13}, it leverages techniques from kernel methods such as kernel mean embeddings \citep{Gretton2007, Sriperumbudur2010} and random Fourier features \citep{Rahimi2007} to obtain innovative statistical guarantees.

As a trivial example,
assume $\sample,\hyp$ belong to $\RR^\sampleDim$, and
consider the squared loss $\loss(\sample,\hyp)=\norm{\sample-\hyp}^2$, whose risk minimizer
is $\Exp[\Sample]$.
In this specific example, keeping only the $\sampleDim$ empirical
averages of the coordinates of $\Sample$ is obviously sufficient. 
The vision developed in this paper is that, for certain learning problems, all the necessary information can be captured in a {\em sketch}: a vector of empirical (generalized) moments of the collection that captures the information relevant to the considered learning task. Computing the sketch is then feasible in one pass, and a near-minimizer of the risk can be computed from the sketch with controlled generalization error.

This paper is dedicated to show how this phenomenon can be generalized: roughly speaking, can the sketch size be taken to be proportional to the number of ``intrinsic parameters'' of the learning task? Another fundamental requirement for the sketching operation is to be online. When recording the training collection, it should be possible to update the sketch at almost no additional cost. The original training collection can then be discarded and learning can be performed from the sketch only, potentially leading to privacy-preservation. As shown in the companion paper \citeparttwo, a sketching procedure based on random generalized moments meets these requirement for clustering and Gaussian mixture estimation.

\subsection{Inspiration from compressive sensing}

Another classical example of learning task is (centered) Principal Component Analysis (PCA). 
In this setting, $\sample \in \RR^{\sampleDim}$, $\hyp$ is an arbitrary linear subspace of dimension $\PCAdim$, and the loss is $\loss(\sample,\hyp) = \norm{\sample-P_{\hyp} \sample}_{2}^{2}$ with $P_{\hyp}$ the orthogonal projector onto $\hyp$. The
matrix of second moments $\Cov_{\Prob} := \Exp_{\Sample \sim \Prob} \Sample\Sample^{T}$ is known to summarize all the information needed to select the best subspace for a training collection. It thus constitutes a natural sketch (of finite dimension $\sampleDim^{2}$) of the training set. 

A much smaller sketch can in fact be computed. Results from compressive sensing and low-rank matrix completion \citep{FouRau13} allow to compress the matrix of second moments to a sketch of dimension of the order of $\PCAdim \sampleDim$ (much smaller that $\sampleDim^{2}$ when $\PCAdim \ll \sampleDim$) from which the best rank-$\PCAdim$ approximation to $\Cov_{\Prob}$ can be accurately estimated (this rank-$k$ approximation allows to calculate the PCA with appropriate learning guarantees, as we will see in Section~\ref{sec:CompressivePCA}).
This compression operation is made using random linear projections on $\Cov_{\Prob}$, which can be seen as random second order moments of the training collection. 

We propose to generalize such a sketching procedure to arbitrary random generalized moments. Given a learning task and training collection, we study the following questions:
\begin{itemize}
 \item How can we perform learning from a sketch of the training collection?
 \item What statistical learning guarantees can we obtain with such a procedure?
 \end{itemize}
 \subsection{Contributions}
 \label{pfffhhblub}
In this paper, we present a general compressive learning framework.  
\begin{itemize}
 \item We describe a generic {\bf sketching mechanism} with random generalized moments and provide a theoretical learning procedure from the sketched data. 
 \item We derive general {\bf learning guarantees} for sketching with random generalized moments.
 \end{itemize}
In the companion paper \citeparttwo, we exploit this framework to establish statistical learning guarantees for  compressive clustering and compressive Gaussian mixture estimation.
We conclude this paper by briefly discussing the potential impact of the proposed framework and its extensions in terms of privacy-aware learning and of the insight it may bring on the information-theoretic properties of certain convolutional neural networks.

\subsection{Related work}\label{sec:related_work}

\paragraph{Sketching and streaming methods.}  \emph{Sketches}  are closely linked with the development of \emph{streaming methods} \citep{Cormode2011}, in which data items are seen once by the user then discarded. A sketch is a small summary of the data seen at a given time, that can be queried for a particular piece of information about the data. As required by the streaming context, when the database is modified, e.g. by inserting or deleting an element, the subsequent update of the sketch must be very fast. In practice, sketches are often applied in various contexts where the data are stored in multiple places. In this heavily distributed framework, a popular class of sketches is that of \emph{linear} sketches, i.e. structures such that the sketch of the union of two databases is the sum of their sketches -- then the sketch of a database distributed over several parts is simply the sum of all their sketches. The sketch presented in this work is indeed a linear sketch (when considered without the normalization constant $1/\nSamples$) and as such, updates operations are excessively simple and fast. Sketches have been used for a large variety of operations \citep{Cormode2011} such as the popular detection of heavy-hitters \citep{Cormode2005,Cormode2009}. Closer to our framework, sketches have been used to approximately maintain histograms \citep{Thaper:2002kf} or quantiles \citep{Gilbert02howto}, however these methods are subject to the well-known curse of dimensionality and are unfeasible even in moderate dimension.

\paragraph{Learning in a streaming context.} Various learning algorithms have also been directly adapted to a streaming context. Examples include the Expectation-Maximization algorithm \citep{Andrieu2003,Cappe2009}, the $k$-means algorithm \citep{Guha2000,Ailon2009}, or Principal Component Analysis \citep{Ghashami2016}. In each case, the result of the algorithm is updated as new data becomes available. However these algorithms do not fully benefit from the many advantages of sketches. Sketches are simpler to merge in a distributed context, update operations are more immediate, and the learning step can be delocalized and performed on a dedicated machine.

\paragraph{Coresets.}
Another popular class of structures that summarize a database for learning is called \emph{coresets}. Coresets were initially developed for $k$-means \citep{Har-Peled2004} or, more generally, subspace approximation \citep{Feldman2010,Feldman2011} and also applied to learning Gaussian Mixture Models \citep{Feldman2011a,Lucic2017}. In a sense, the philosophy behind coresets is situated halfway between sketches and streaming learning algorithms. Like the sketching approaches, coresets methods construct a compressed representation of the database (or ``coreset''), but are somehow closer to already approximately performing the learning task. For instance, the coreset described in \citep{Frahling2005} already incorporates steps of Lloyd's $k$-means algorithm in its construction. Similar to the $k$-means++ algorithm \citep{Arthur2007}, many coresets have been developed as (weighted) adaptive subsampling of the data \citep{Feldman2011a,Lucic2017}. 

\paragraph{Linear sketches {\em vs} Coresets.} It is in general difficult to compare sketching and coresets methods (including the sketching method presented in this paper) in terms of pure performance or theoretical guarantees, since they are very different approaches that can be more or less adapted to certain contexts. We can however outline some differences. Unlike sketches, coresets are not specifically build for the streaming context, and they may require several passes over the data. Nevertheless they can still be adapted to streams of data \citep[as described e.g. in][]{Har-Peled2004,Feldman2011,Lucic2017} by using a merge-and-reduce hierarchical strategy: for each batch of data that arrives sequentially, the user builds a coreset, then groups these coresets and builds a coreset of coresets, and so on. This update method is clearly less direct than updating a linear sketch, and more importantly the user must balance between keeping many coresets and letting the size of the overall summary grow with the number of points in the database, or keeping only highest-level coresets at the cost of losing precision in the theoretical guarantees each time the height of the hierarchical structure increases. As a comparison, the sketch presented in the companion paper \citeparttwo  for $k$-means  %
does not have these limitations: like with any linear sketch, updates are totally independent of previous events, and for a fixed sketch size the ability to perform the learning task strictly increases with the number of points. 

\paragraph{Generalized Method of Moments and Compressive Sensing.}
The methodology that we employ to develop the proposed sketching framework is similar to a Generalized Method of Moments (GeMM) \citep{Landau1987,Hall2005}: the parameters $\Param$ of a model are learned by matching a collection of theoretical generalized moments from the distribution $\Prob_\Param$ with empirical ones from the data. GeMM is often seen as an alternative to Maximum Likelihood estimation, to obtain different identifiability guarantees \citep{Belkin2010a,Hsu2013,Anderson2013} or when the likelihood is not available. Traditionally, a finite number of moments is considered, but modern developments give guarantees when an infinite (integral) number of generalized moments are available \citep{Carrasco2000,Carrasco2014}, in particular generalized moments associated to the (empirical) characteristic function \citep{Carrasco:2002ti,Feuerverger:1977bc}. Our point of view is slightly different: we consider the collection of moments as a \emph{compressed} representation of the data and as a means to achieve a learning task. 

Compared to the guarantees usually obtained in GeMM such as consistency and efficiency of the estimator $\hat{\Param}$, the results that we obtain are more akin to Compressive Sensing and Statistical Learning. For instance, when learning Gaussian Mixture Models, %
we prove in the companion paper \citeparttwo that learning is robust to modeling error (the true distribution of the data is not exactly a GMM but close to one), which is generally overlooked \rev{in} GeMM. In the proof technique, this is done by replacing the so-called ``global identifiability condition'', (i.e. injectivity of the moment operator), which is a classical condition in GeMM but is already difficult to prove and sometimes simply assumed by practitioners \citep[see][p. 2127]{Newey1994} by the strictly stronger Lower Restricted Isometry Property (LRIP) 
from the Compressive Sensing literature \citep{MR2241189,stablesigrecovery-CandesRombergTao-2006,Baraniuk:2007aa,FouRau13}. This is achieved by considering {\em random} feature moments (related to random features \citep{Rahimi2007,Rahimi2009,Bach:2015ux} and kernel mean embeddings \citep{Sriperumbudur2010}), so in a sense the resulting Compressive Statistical Learning framework could be considered as a \emph{Method of Random Feature Moments}.
While the LRIP is reminiscent of certain kernel approximation guarantees with random features \citep[see e.g.][]{Sriperumbudur:2015to,Bach:2015ux}, it is in fact of a different nature, and none seems to be a direct consequence of the other. 

\subsection{Outline}

Section~\ref{sec:general_framework} describes our general framework for compressive statistical learning. We define here statistical learning guarantees, introduce the required notions and state our general Theorem on statistical learning guarantees for compressive learning.  
An important concept is the notion of Lower Restricted Isometry Property (LRIP) using the notion of a model set (a set of ``simple'' probability distributions) which is futher discussed in Section~\ref{sec:choosemodel}.
To illustrate the proposed framework, we detail in Section~\ref{sec:CompressivePCA} 
a procedure for Compressive PCA, where we do not intend to match the latest developments in the domain of PCA such as stochastic and incremental PCA~\citep{Arora2012,Balsubramani2013}, \rev{ kernel PCA using Nyström sampling \citep{Sterge:2020kpca} or random features \citep{Ullah:2018nips,Sriperumbudur:2020approx};}
but rather to give a first illustration.
Generic techniques to establish the LRIP property for sketches of controlled size are described in Section~\ref{sec:ChoiceSketch}.
In the companion paper \citeparttwo, we specify a sketching procedure and state the associated learning guarantees for compressive clustering and compressive Gaussian mixture estimation.
We discuss in Section~\ref{sec:future} possible extensions of the proposed framework as well as the insight it may bring on the information flow across one layer of a convolutive neural network with average pooling.
Finally, all proofs are stated in the Appendix.
\enlargethispage{0.5cm}
\section{A general compression framework for statistical learning}\label{sec:general_framework}
This section is dedicated to the introduction of our compressive learning framework. 

\subsection{Statistical learning}

Statistical learning offers a standardized setting where many learning problems (supervised or unsupervised) can be expressed as the optimization of an expected risk over a parameterized family of functions. Formally, we consider a training collection $\dataset=\{\sample_{i}\}_{i=1}^{\nSamples} \in \SampleSpace^{\nSamples}$  drawn i.i.d. from a probability distribution $\Prob$ on the measurable space $(\SampleSpace,\mathfrak{Z})$. In our examples, $\SampleSpace = \RR^\sampleDim$ is endowed with the
Borel $\sigma$-algebra $\mathfrak{Z}$. One wishes to select a hypothesis $\hyp$ from a hypothesis class $\HypClass$ to perform the task at hand. How well the task can be accomplished with the hypothesis $\hyp$ is typically measured through a {\em loss function} $\loss: (\sample,\hyp) \mapsto \loss(\sample,\hyp) \in \RR$ and the {\em expected risk} associated to $\hyp$:
\[
\Risk(\Prob,\hyp) := \Exp_{\Sample \sim \Prob}\ \loss(\Sample,\hyp)\,,
\]
where (here and in the sequel) we will always assume that we restrict our attention to probability
  distributions $\Prob$ such that $x\mapsto \ell(x,h)$ is measurable and $\Prob$-integrable for all $h\in \HypClass$.
  In the idealized learning problem, one selects a function $\hyp^{\star}_{\Prob}$ that minimizes the expected risk (we will assume existence of this minimum for a simpler presentation, although most
    statements to come can be transformed if needed using a sequence of approximate minimizers)
\begin{equation}\label{eq:BestHyp}
\hyp^{\star}_{\Prob} \in\arg\min_{\hyp \in \HypClass} \Risk(\Prob,\hyp).
\end{equation}
We will use the shorthand $\hyp^{\star}$ for $\hyp^{\star}_{\rev{\Prob}}$ whenever
there is no ambiguity from the context.
In practice one has no access to the true risk $\Risk(\Prob,\hyp)$ since the expectation with respect to the underlying probability distribution, $\Exp_{\Sample\sim\Prob} [\cdot]$, is unavailable. Instead, methods such as \emph{empirical risk minimization (ERM)} produce an estimated hypothesis  $\hat{\hyp}$ from the training dataset $\dataset$ by minimizing the risk
$\Risk(\empProb,\cdot)$ (or a regularized version) associated to the  {\em empirical probability distribution} $\empProb := \tfrac{1}{\nSamples}\sum_{i=1}^{\nSamples} \delta_{\sample_{i}}$ of the training samples. 
One expects to  produce, with high probability at least $1-\probLevel$ on the draw of the training set, the  bound on the excess risk
\begin{equation}\label{eq:ControlRiskHyp}
   \Risk(\Prob,\hat{\hyp}) - \Risk(\Prob,\hyp^\star) \leq  \eta_{\nSamples} = \eta_{\nSamples}(\probLevel),
\end{equation}
where $\eta_{\nSamples}$ typically decays as $1/\sqrt{n}$ or better.
We will use the following running examples.
\paragraph{Examples:}
\begin{itemize}
\item {\bf PCA}: as stated in the introduction, the loss function is  $\loss(\sample,\hyp) = \norm{\sample-P_{\hyp} \sample}_{2}^{2}$ where $P_{\hyp}$ is the orthogonal projection onto the subspace hypothesis $h$ of prescribed dimension $k$.
\item {\bf $k$-means clustering}: each hypothesis corresponds to a set of $k$ candidate cluster centers, $\hyp = \set{c_1,\ldots,c_k}$, and the loss is defined by the $k$-means cost $\loss(\sample,\hyp) = \min_{1 \leq l \leq k} \norm{\sample-c_{l}}_{2}^{2}$. The hypothesis class $\HypClass$ may be further reduced by defining constraints on the considered centers (e.g., in some domain, or as detailed in the companion paper \citeparttwo with some separation between centers). %
\item {\bf Gaussian Mixture Modeling}: each hypothesis $\hyp$ corresponds to the collection of weights, means and variances of a mixture of $k$ Gaussians, whose probability density function is denoted $\Prob_\hyp(\sample)$. The loss function is based on the maximum likelihood $\ell(\sample,h)=-\log \Prob_h(\sample)$. %

\end{itemize}

\subsection{Compressive learning} 

Our aim, and one of the major achievements of this paper, is to control the excess risk~\eqref{eq:ControlRiskHyp} using an estimate $\hat{\hyp}$ {\em obtained from the sole knowledge of a sketch of the training collection}. As we will see, the resulting philosophy for large-scale learning is, instead of addressing an ERM optimization problem of size proportional to the number of training samples,  to first compute a sketch vector {\em of size driven by the complexity of the task}, then to address a nonlinear least-squares optimization problem associated to the {\em Generalized Method of Moments (GeMM)} on this sketch.

 Taking its roots in compressive sensing \citep{MR2241189,stablesigrecovery-CandesRombergTao-2006,FouRau13} and the generalized method of moments  \citep{Landau1987,Hall2005}, but also on kernel mean embeddings \citep{Smola2007,Sriperumbudur2010}, random features \citep{Rahimi2007,Rahimi2009,Bach:2015ux}, and streaming algorithms \citep{Gilbert02howto,Cormode2005,Cormode2011}, \emph{compressive learning} 
relies on the choice of a \rev{measurable} (nonlinear) \emph{feature function}
 $\SketchingOperator:
 \SampleSpace \rightarrow  \RR^{\nMeasures}\ \text{or}\ \CC^{\nMeasures}$ and has two main steps:
\begin{enumerate}
\item Compute generalized empirical moments using the feature function on the training collection to summarize it into a single \emph{sketch vector}
\begin{equation}\label{eq:GenericSketching}
\vy := \Sketch(\dataset) := \frac{1}{n} \sum_{i=1}^{\nSamples}\SketchingOperator(\sample_i) \in \RR^{\nMeasures} \text{or}\ \CC^{\nMeasures};
\end{equation}
\item Produce a hypothesis from the sketch using an appropriate learning procedure:
$\hat{\hyp} = \Learn(\vy)$.
\end{enumerate}
Overall, the goal is to design the %
\rev{feature} function $\SketchingOperator(\cdot)$ and the learning procedure $\Learn(\cdot)$ given a learning task (i.e., a loss function) such that the resulting hypothesis $\hat{\hyp}$ has controlled excess risk~\eqref{eq:ControlRiskHyp} (if $\SketchingOperator$ is drawn at random according to some specification, we want
\eqref{eq:ControlRiskHyp} to hold with high probablity also with respect to the draw of $\SketchingOperator$).
To anticipate 
 \rev{on what will be developed in Section~\ref{sec:explicitriskproxy} (notably Eq.~\eqref{eq:GenericRiskProxy})}, 
 let us mention that learning from a sketch will take the form of a minimization problem 
\begin{equation}\label{eq:DefRiskProxy}
\hat{\hyp} \in \arg\min_{\hyp \in \HypClass} \proxyRisk(\vy,\hyp)
\end{equation}
 where in a sense $\proxyRisk(\vy,\cdot)$ will play the role of a proxy for the empirical risk $\Risk(\empProb,\cdot)$.

\paragraph{Trivial examples.} 
\begin{itemize}
 \item  Estimation of the mean: Assume $\sample,\hyp$ belong to $\RR^\sampleDim$, and
consider the squared loss $\loss(\sample,\hyp)=\norm{\sample-\hyp}^2$, whose risk minimizer is $\Exp[\Sample]$.
In this specific example, it is obviously sufficient to keep only the $\sampleDim$ empirical
averages of the coordinates of $\Sample$, i.e., to use $\SketchingOperator(\sample) := \sample$.
 \item PCA: As the  
 principal components are calculated from the eigenvalue decomposition of the matrix of second moments of the samples, we can simply 
 use $\SketchingOperator(\sample) := \sample\sample^{T}$.
\end{itemize}

A less trivial example is {\em Compressive PCA}. Instead of estimating the full matrix $\Cov_{\Prob} ~=~\Exp_{X \sim \Prob} XX^{T}$, of size $\sampleDim \times \sampleDim$, it is known that computing $\nMeasures$ random gaussian linear measurements of this matrix makes it possible to manipulate a vector $\vy$ of dimension $\nMeasures$ of the order of $k\sampleDim$ from which one can accurately estimate the best rank-$k$ approximation to $\Cov_{\Prob}$, that gives the $k$ first principal components. Nuclear norm minimization is typically used to produce this low rank approximation given the vector $\vy$. We will describe this procedure in details in Section~\ref{sec:CompressivePCA} as a first illustration of our framework.
In the companion paper \citeparttwo, for the more challenging examples of {\em Compressive $k$-means} and {\em Compressive Gaussian Mixture Modeling}, we provide a feature function $\SketchingOperator$ and a method ``$\Learn$'' (based on a specific proxy~\eqref{eq:DefRiskProxy} corresponding to a non-convex least-squares minimization) that leads to a control of the excess risk. 

As described below, these results are achieved by establishing links with the formalism of linear inverse problems and low complexity recovery (i.e., sparse/structured vector recovery, low-rank matrix recovery) and extending theoretical tools to the setting of compressive statistical learning.

\subsection{Compressive learning as a linear inverse problem} 
The most immediate link with linear inverse problems is the following. The sketch vector $\vy$ can be seen as a \emph{linear} function of the empirical probability distribution $\empProb := \tfrac{1}{\nSamples}\sum_{i=1}^{\nSamples} \delta_{\sample_{i}}$: 
\[
 \vy = \rev{\Sketch(\dataset) = } \frac{1}{\nSamples} \sum_{i=1}^{\nSamples} \SketchingOperator(\sample_{i})  = \SketchingOperatorProb(\empProb),
\]
where $\SketchingOperatorProb$ is a linear operator from the set of distributions
$\Prob$ such that $\SketchingOperator$ is integrable with respect to $\Prob$, to $\RR^m$ (or $\CC^{\nMeasures}$), defined by 
\begin{equation}\label{eq:SketchingOperatorProbDef}
 \SketchingOperatorProb(\Prob)  :=\Exp_{\Sample \sim \Prob}\SketchingOperator(\Sample).
\end{equation}
This is linear in the sense that\footnote{One can extend $\SketchingOperatorProb$ to a linear operator on the space of finite signed measures such that $\SketchingOperator$ is integrable, see Appendix~\ref{sec:FiniteSignedMeasures}.} $\SketchingOperatorProb(\theta \Prob+(1-\theta)\Prob') = \theta\SketchingOperatorProb(\Prob)+(1-\theta)\SketchingOperatorProb(\Prob')$ for any $\Prob,\Prob'$ and $0 \leq \theta \leq 1$. 

Since for large $n$ we should have $\SketchingOperatorProb(\empProb) \approx \SketchingOperatorProb(\Prob)$, the sketch $\vy$ can be viewed as a noisy linear observation of the underlying probability distribution $\Prob$. 
This viewpoint allows to formally leverage the general methodology of linear inverse problems to produce a hypothesis from the sketch $\vy$.

Conceptually, we construct the learning-from-sketch procedure $\hat{\hyp} = \Learn(\vy)$ in two steps:
\begin{itemize}
 \item Define a so-called {\em decoder} $\Delta$ that finds a probability distribution $\estProb$ given a sketch $\vy$:
 \begin{equation*}
  \estProb = \Delta[\vy];
 \end{equation*}
 \item Find a best hypothesis from this estimate: 
 \begin{equation}\label{eq:defEstHyp}
  \hat{\hyp} %
  \in \arg\min_{h \in \HypClass} \Risk(\estProb,h).
\end{equation}
\end{itemize}
As a first coarse analysis of this scheme, notice that if the decoder step is such that
a uniform approximation between the risk of $\estProb$ and of $\Prob$ holds:
\begin{equation}\label{eq:UnifControlRisk}
\sup_{\hyp \in \HypClass} | \Risk(\Prob,\hyp)-\Risk(\estProb,\hyp)  | \leq \tfrac12 \eta_{\nSamples},
\end{equation}
then we will be able to control the excess risk~\eqref{eq:ControlRiskHyp} -- our goal.
Indeed, using~\eqref{eq:defEstHyp} and the triangle inequality, it is easy to show that~\eqref{eq:UnifControlRisk} directly implies~\eqref{eq:ControlRiskHyp}. (We will see later that~\eqref{eq:UnifControlRisk} can be too coarse and will
introduce a more refined analysis based on excess risks in Section~\ref{se:improvedanalysis}.) In a way, this is very similar to ERM except that instead of using the empirical risk $\Risk(\empProb,\cdot)$, we use an estimate of the risk $\Risk(\estProb,\cdot)$ where $\estProb$ is deduced directly from the sketch $\vy$.  

\begin{remark} \label{rk:nodensityestimation}
At first sight, the above conceptual view may wrongly suggest that compressive learning replaces statistical learning with the  much more difficult problem of {\em non-parametric density estimation}. Fortunately, as we will see, this is not the case, thanks to the fact that our objective is never to accurately estimate $\Prob$ in the standard sense of density estimation as, e.g., in~\citep{BERTIN:2009:HAL-00381984:1}, but only to accurately estimate the risk $\Risk(\Prob,\cdot)$. On practical examples, a natural decoder will be based on best moment matching over a parametric family of probability distributions, which will be expressed more directly as the minimization of a proxy for the risk \eqref{eq:DefRiskProxy}, cf Section~\ref{sec:explicitriskproxy}.
\end{remark}

\subsection{Statistical learning guarantees: a first control of the excess risk}\label{sec:control_excess_risk}

In this section, for simplicity we first focus on how to establish uniform control of the risks of the form~\eqref{eq:UnifControlRisk} using general results from linear inverse problems (we shall
introduce in the next section a sharper but also slightly more
notation-heavy analysis).
To leverage the links between compressive learning and general inverse problems, we further notice that
\(
\sup_{\hyp \in \HypClass} | \Risk(\Prob,\hyp)-\Risk(\Prob',\hyp)  |
\)
can be viewed as a metric on probability distributions. Given a class $\GClass$ of measurable
functions $f: \SampleSpace \to \RR\ \text{or}\ \CC$, we use the following notation throughout this work:
\begin{equation}
\label{eq:DefFNorm}
\normfclass{\Prob-\Prob'}{G} := \sup_{f \in \GClass }\abs{\Exp_{\Sample \sim \Prob} f(\Sample)-\Exp_{\Sample' \sim \Prob'}f(\Sample')},
\end{equation}
which defines a semi-norm on the space of finite signed measures (see Appendix~\ref{sec:FiniteSignedMeasures}) on $(\SampleSpace,\mathfrak{Z})$ %
 such that all $f \in \rev{\GClass}$ are integrable. In order to be explicit about the integrability
  assumptions in the results to come, we will call this space
  the set of  $\GClass$-integrable finite signed measures (resp. probability distributions, when appropriate).

With this notation,
we have $\sup_{\hyp \in \HypClass} \abs{ \Risk(\Prob,\hyp)-\Risk(\Prob',\hyp)} = \normlossf{\Prob-\Prob'}{}$ where
\begin{equation}\label{eq:DefRiskNorm}
\LossClass(\HypClass) := \set{\loss(\cdot,\hyp): \hyp \in \HypClass}.  
\end{equation}
We will usually abbreviate the latter notation by dropping the dependence on $\HypClass$, considered fixed. 
The desired guarantee~\eqref{eq:UnifControlRisk} then reads $\normloss{\Prob-\Delta[\vy]}{} \leq \eta_{n}/2$.

In the usual context of linear inverse problems, producing an accurate estimate 
from noisy underdetermined linear observations 
requires some ``regularity''  assumption.
Such an assumption often takes the form of a ``low-dimensional'' model set 
that the quantity to estimate 
is close to. 
\begin{example}
In the case of sparse vector recovery (respectively low-rank matrix recovery), one wishes to estimate $\vx \in \RR^{n}$ (resp. $\mX \in \RR^{n \times n}$) from $\vy \approx \mA \vx$ (resp. $\vy \approx \mA \mathrm{vec}(\mX)$). Guarantees are achieved when $\vx$ is close to the set of $k$-sparse vectors (resp. when $\mX$ is close to the set of rank-$r$ matrices).
\end{example}

Similarly here, estimating $\Prob$ from $\vy \approx \SketchingOperatorProb(\Prob)$ may require considering some model set $\Model$, whose choice and definition will be discussed in Section~\ref{sec:choosemodel}. 
\begin{remark}
While in classical compressive sensing the model set plays the role of prior knowledge on the data distribution that completes the observations, in the examples considered here we will often obtain {\em distribution free} excess risk guarantees using models {\em derived from the loss function}. 
\end{remark}

Given a model set\footnote{We will always assume that the models $\Model$ under consideration are such that loss and feature functions are integrable with respect to any distribution belonging to $\Model$, i.e. $\Model$ is  both $\LossClass$-integrable and $\set{\SketchingOperator}$-integrable,
using the terminology introduced after~\eqref{eq:DefFNorm}.}  $\Model$ that plays the role of regularizer, and a sketching operator $\SketchingOperatorProb$, an ``ideal'' decoder $\Decoder$ should be robust to two different sources of error: the distribution of the data $\Prob$ generally does not belong to $\Model$ but is ``close'' to it, introducing some \emph{modelling error}, and the empirical sketch is used instead of the true generalized moments, which adds some \emph{noise}. Generalizing early formulations for sparsity-regularized inverse problems, a decoder robust to both noise and modelling error is usually reffered to as \emph{instance optimal} \citep{Cohen_2009, Bourrier2014}. Mathematically, this can be expressed as: for any distribution $\Prob$, any draw of the training samples from $\Prob$ (embodied by the empirical distribution $\empProb$), with $\vy = \SketchingOperatorProb(\empProb)$ and $\estProb=\Decoder[\SketchingOperatorProb(\empProb)]$
\begin{equation}
\label{eq:InstanceOptimality}
\normloss{ \estProb-\Prob}{}
\lesssim
d(\Prob,\Model) + \norm{\SketchingOperatorProb(\Prob)-\SketchingOperatorProb(\empProb)}_{2}
\end{equation}
where $\lesssim$ hides multiplicative constants, and $d(\cdot,\Model)$ is some measure of distance to the model set $\Model$. In the rest of the paper, we refer to this first term as ``bias''. A significant part of later sections will be devoted to the control of the bias and the choice of a good model set.
Proving that a decoder satisfies~\eqref{eq:InstanceOptimality} ultimately serves to establish bounds such as~\eqref{eq:UnifControlRisk} to control the excess risk.

It turns out that general results from abstract linear inverse problems \citep{Bourrier2014} can be adapted to already characterize the {\em existence} of a decoder satisfying %
property \eqref{eq:InstanceOptimality}. 
By \citep[Section IV-A]{Bourrier2014}, if a decoder with the above property exists then a so-called {\em lower Restricted Isometry Property (LRIP)}
must hold: there is a finite constant $C_{\mathcal{A}} < \infty$ such that
\begin{equation}
\label{eq:lowerRIP}
\normloss{\mProb'-\mProb}{} \leq C_{\mathcal{A}} \norm{\SketchingOperatorProb(\mProb')-\SketchingOperatorProb(\mProb)}_{2}\quad \forall \mProb,\mProb' \in \Model.
\end{equation}
Conversely, the LRIP~\eqref{eq:lowerRIP} implies \citep[Theorem 7]{Bourrier2014} that the following decoder (also known as {\em ideal decoder})
\begin{equation}\label{eq:DefIdealDecoder}
\Decoder[\vy]:=\argmin_{\mProb \in \Model} \norm{\SketchingOperatorProb(\mProb)-\vy}_{2},
\end{equation}
which corresponds to best moment matching, is instance optimal, i.e.,~\eqref{eq:InstanceOptimality} holds for any $\Prob$ and $\empProb$, with the particular distance 
\begin{equation}
\label{eq:DefMDist}
\distIOP(\Prob,\Model) := \inf_{\mProb \in \Model} 
\Big\{\normloss{\Prob-\mProb}{} +  C_{\mathcal{A}} \norm{ \SketchingOperatorProb(\Prob)-\SketchingOperatorProb(\mProb)}_{2}\Big\}.
\end{equation}
As a consequence, the LRIP~\eqref{eq:lowerRIP} implies a control of the excess risk achieved with the hypothesis $\hat{\hyp}$ selected with~\eqref{eq:defEstHyp}, where $\estProb = \Delta[\vy]$, as
\begin{equation}\label{eq:MainRiskBound}
\Risk(\Prob,\hat{\hyp}) - \Risk(\Prob,\hyp^{\star}) \leq 4\distIOP(\Prob,\Model) + 4C_{\mathcal{A}} \norm{\SketchingOperatorProb(\Prob)-\SketchingOperatorProb(\empProb)}_{2}
\end{equation}
where we used explicit constants from \citep[Theorem 7]{Bourrier2014}. Note that in the above argument,
  it was never used that the data is distributed i.i.d. from $\pi$. Estimate~\eqref{eq:MainRiskBound} therefore holds
  under this form for {\em any} fixed data sample (in fact, for any empirical distribution $\empProb$, being understood
  that it determines $\hat{\hyp}$) and {\em any} distribution $\Prob$. Of course, the data distributional assumption is useful to control the second term in the bound.

\subsection{Improved excess risk analysis}

\label{se:improvedanalysis}

The analysis of the previous section has the merits of simplicity, generality, and using existing results from linear inverse problems. However it has some limitations, in particular when the bias term $\distIOP(\Prob,\Model)$ is not close to zero. To emphasize this point, we consider a simple example and
  compare the excess risk control (for the same sketched learning procedure) obtained 
  through the general bound~\eqref{eq:MainRiskBound}, to a direct computation specific to this example.

\rev{Consider the problem of estimating the median} of a distribution on $\RR$: we assume $\SampleSpace=\HypClass = \RR$, and consider the
absolute value loss $\loss(\sample,\hyp) = \abs{\sample-\hyp}$, whose risk minimizer under
the distribution $\Prob$ is the median
$\hyp^{\star}=\Med(\Prob)$.
As a sketching operator we take simply $\SketchingOperator(\sample) = \sample$, resulting in the sketch
given by the empirical mean
$\vy = \SketchingOperatorProb(\empProb) = \frac{1}{\nSamples} \sum_{i=1}^{\nSamples} \sample_{i}$.
Finally, as a model we consider the family of 1-point Dirac measures, $\Model=\set{\delta_\sample,\sample \in \RR}
$ (this is the model consisting of all distributions with vanishing optimal
risk, see Section~\ref{se:leastrestrictedmodel} for a more general discussion). Obviously, it then holds with these choices that the ideal decoder given by~\eqref{eq:DefIdealDecoder}
is $\estProb = \Decoder[\vy] = \delta_\vy$, and further $\hat\hyp=\vy$. \\
On the one hand, the excess risk for this sketching/decoding scheme is bounded as follows, by a simple direct calculation, putting $\Moy(\Prob):=\Exp_{\Sample \sim \Prob}[X]$:
\begin{align}
  \Risk(\Prob,\hat{\hyp}) - \Risk(\Prob,\hyp^\star)
  &= \Exp_{\Sample \sim \Prob}\brac{\abs{X-\vy} - \abs{X-\Med(\Prob)}} \nonumber\\
  & %
      \leq \underbrace{\Exp_{\Sample \sim \Prob}[\abs{X-\Moy(\Prob)} - \abs{X-\Med(\Prob)}]}_{=:\mathcal{B}(\Prob)} + \abs{\vy - \Moy(\Prob)}.      \label{eq:directcalc}
\end{align}
On the other hand, it is easy to check that the LRIP~\eqref{eq:lowerRIP} holds, with equality, for 1-point Dirac measures with constant $C_{\SketchingOperatorProb}=1$. If we consider the general bound~\eqref{eq:MainRiskBound}, while
we recover (up to factor 4) the second term above $\norm{\SketchingOperatorProb(\Prob)-\SketchingOperatorProb(\empProb)}_{2} = \abs{\vy - \Moy(\Prob)}$ (of order $\mathcal{O}(1/\sqrt{n})$), the first term in~\eqref{eq:MainRiskBound} is a {\em bias term} which is driven by
  \begin{align}
    \distIOP(\Prob,\Model)
    &:= \inf_{h \in \RR} \Big\{
\sup_{h' \in \RR} \big\lvert \Exp_{\Sample \sim \Prob}\brac{\abs{X-h'}} - |h-h'| \big\rvert +  \abs{ \Moy(\Prob) -h} \Big\}    \nonumber \\
    &\geq  \inf_{h \in \RR}  \Big\{ \Exp_{\Sample \sim \Prob}\brac{\abs{X-h}} +
      \abs{ \Moy(\Prob)-h} \Big\} \nonumber\\
    & \geq \Exp_{\Sample \sim \Prob}\brac{\abs{X-\Moy(\Prob)}} \nonumber \\
    & =  \mathcal{B}(\Prob) + \Risk(\Prob,\hyp^\star).
    \label{eq:firstbound}
  \end{align}
  The inequality in the third line above is obtained by noticing that 
  $\abs{x-\Moy(\Prob)} \leq \abs{x-h}+\abs{\Moy(\Prob)-h}$ for each $x,h \in \RR$.
Hence, using the general bound~\eqref{eq:MainRiskBound} instead of the specific direct calculation,
  we get an additional, unwanted term corresponding to the optimal risk
  $\Risk(\Prob,\hyp^\star)$ which is nonzero as soon as
$\Prob \not\in\Model$, and can become arbitrary large (even if $\mathcal{B}(\Prob)=0$, e.g. if $\Prob$ is symmetric around its mean).

One reason for this lack of sharpness is that the analysis in the previous section concentrated
first on (uniform) control of the risk difference $\normloss{\Prob-\Prob'}{}$, to deduce only as a second step
a control on the excess risk. It is known from the statistical learning literature that it is generally
sharper to directly analyze the excess risk; and correspondingly consider the {\em excess loss} class
\rev{(see, for instance, \citealp[Section 5]{bartlett:2005lo} and \citealp[Section 7]{koltchinskii:2006lo})}:
\begin{equation}\label{eq:DefDLossClass}
\DLossClass(\HypClass):= \LossClass(\HypClass) - \LossClass(\HypClass) 
= \set{g: \sample \mapsto g(\sample) = \loss(\sample,\hyp)-\loss(\sample,\hyp'), \hyp,\hyp' \in \HypClass}.
\end{equation}
However, the risk minimizer $\hyp^{\star}$ depends
on the distribution $\Prob$; for this reason we will consider a {\em family} of excess losses and risks with respect
to some reference hypothesis $\hyp_0$.
\begin{definition}\label{def:excessriskdiv}
The {\em excess risk} relative to
a reference hypothesis $\hyp_0$ is defined as:
\[
  \drisk_{\hyp_0}(\Prob,\hyp) := \Risk(\Prob,\hyp) - \Risk(\Prob,\hyp_0)
  =  \Exp_{X \sim \Prob}[\loss(X,\hyp) - \loss(X,\hyp_0)],
\]
and the associated {\em excess risk divergence} with respect to $\hyp_0$ is:
\begin{equation}\label{lossdiv}
  \divp_{\hyp_0}(\Prob\|\Prob') := \sup_{\hyp\in \HypClass}  \paren{ \drisk_{\hyp_0}(\Prob,\hyp) - \drisk_{\hyp_0}(\Prob',\hyp)}.
\end{equation}
Observe that $\drisk_{\hyp_0}(\Prob,\hyp_0) = 0$ for each $\Prob$, hence the latter quantity is nonnegative (although no absolute value is involved in its definition), but not symmetric in general. Yet, it satisfies an (oriented) triangle inequality: for any $\Prob,\Prob',\Prob''$
\[
  \divp_{\hyp_0}(\Prob\|\Prob') \leq \divp_{\hyp_0}(\Prob\|\Prob'') +  \divp_{\hyp_0}(\Prob''\|\Prob').
\]
It is therefore a {\em hemimetric} (see Definition~\ref{defhemimetric} in the Appendix).
\end{definition}
The excess risk divergence will play a role similar to that of $\normloss{\Prob-\Prob'}{}$, and satisfies
in particular %
\begin{align}
  \sup_{\hyp_0 \in \HypClass} \divp_{\hyp_0}(\Prob\|\Prob')
  & = \sup_{\hyp_0,h \in \HypClass} (\Exp_{X \sim \Prob}[\loss(X,\hyp) - \loss(X,\hyp_0)] - \Exp_{X \sim \Prob'}[\loss(X,\hyp) - \loss(X,\hyp_0)])
  \nonumber \\
    & = \dnormloss{\Prob - \Prob'}{}  \leq 2 \normloss{\Prob - \Prob'}{}.   \label{eq:ineqdivg}
\end{align}
With this setting we have the following result, which can be seen as a refinement of~\eqref{eq:MainRiskBound}.

\begin{theorem}\label{thm:LRIPsuff_excess}
  Consider a loss class $\LossClass(\HypClass)$, a feature function $\SketchingOperator$, and a model set $\Model$ such that every probability distribution $\mProb \in \Model$ is both $\LossClass$-integrable and $\set{\SketchingOperator}$-integrable. 
  Assume that the sketching operator $\SketchingOperatorProb$ associated to $\SketchingOperator$ satisfies the following LRIP inequality:
\begin{equation}
  \label{eq:lowerRIP_excess}
  \dnormloss{\mProb-\mProb'}{} \leq   C_\SketchingOperatorProb \norm{\SketchingOperatorProb(\mProb)-\SketchingOperatorProb(\mProb')}_2 + \eta, \qquad \forall \mProb,\mProb' \in \Model, %
\end{equation}
for some finite constants $C_\SketchingOperatorProb>0$ and $\eta \geq 0$.

Consider  any training collection $\dataset=\{\sample_{i}\}_{i=1}^{\nSamples} \in \SampleSpace^{\nSamples}$, and denote $\empProb := \tfrac{1}{n} \sum_{i=1}^{n} \delta_{\sample_{i}}$.
Define
\begin{align}
\vy &:= \Sketch(\dataset) = \SketchingOperatorProb(\empProb),\label{eq:ThmSketch2}\\
  \estProb \in \Model  \text{ satisfying }
             \norm{\SketchingOperatorProb(\estProb)-\vy}_2 & \leq (1+\nu) \inf_{\mProb \in \Model}
             \norm{\SketchingOperatorProb(\mProb) -\vy}_2 + \eps, 
             & \text{ for some constants } \eps,\nu \geq 0,
             \label{eq:ThmDecoder2}\\
  \hat{\hyp}  \text{ satisfying }
               \Risk(\estProb,\hat\hyp) & \leq \inf_{\hyp \in \HypClass} \Risk(\estProb,\hyp) + \eps',
               & \text{ for some constant } \eps' \geq 0.
                              \label{eq:ThmLearn2}
\end{align}
Then, %
for any probability distribution $\Prob$ that is both $\LossClass$-integrable and $\set{\SketchingOperator}$-integrable:
\begin{equation}\label{eq:MainBoundExcessRisk}
  \forall \hyp_0 \in \HypClass: \;\;
  \drisk_{\hyp_0}(\Prob,\hat{\hyp})
  \leq \distIOPexgen_{\hyp_0}(\Prob,\Model) +
   (2+\nu) C_\SketchingOperatorProb \norm{\SketchingOperatorProb (\Prob)-\SketchingOperatorProb(\empProb)}_2 + \eta + C_{\SketchingOperatorProb} \eps + \eps',
 \end{equation}
where \begin{equation}\label{eq:DefMDist2}
\distIOPexgen_{\hyp_0}(\Prob,\Model) := \inf_{\mProb \in \Model} \paren{\divp_{\hyp_0}(\Prob\|\mProb)
   + (2+\nu)C_\SketchingOperatorProb \norm{\SketchingOperatorProb(\Prob) -\SketchingOperatorProb(\mProb)}_2}.
   \end{equation}
\end{theorem}
Similarly to ~\eqref{eq:MainRiskBound}, the above estimate holds regardless of any distributional
  assumptions on the training collection $\dataset$.
  Nevertheless, estimate~\eqref{eq:MainBoundExcessRisk} is primarily of interest when $\dataset$ is drawn i.i.d. according to $\Prob$ and with $\hyp_0=\hyp^{\star}_{\Prob}$, in which case the left-hand side
  is the excess risk with respect to the optimum risk, which is what one generally aims at
  controlling.
  However, in some situations it may be also helpful to consider
  excess risk with respect to other reference hypotheses $\hyp_0$; this can include situations
  where $\hyp^{\star}_{\Prob}$ itself is not well-defined if the infimum of the
  risk is not attained.\\  
  As compared to%
~\eqref{eq:MainRiskBound},
we observe that this result is more general, as it allows
for a $(\nu,\eps)$-approximate decoder~\eqref{eq:ThmDecoder2}, 
an
$\eps'$-approximate ERM~\eqref{eq:ThmLearn2}, an $\eta$-approximate LRIP condition~\eqref{eq:lowerRIP_excess}; more importantly,
the main bound~\eqref{eq:MainBoundExcessRisk} involves the sharper excess risk divergence rather than the
loss norm $\normloss{.}{}$. It may also be useful to consider $\Prob = \empProb$, to predict the quality compared to the empirical risk minimizer.

Moreover, inequality~\eqref{eq:ineqdivg} implies that the lower LRIP condition~\eqref{eq:lowerRIP}
considered in the previous section implies the relaxed LRIP condition~\eqref{eq:lowerRIP_excess}
(with $\eta=0$, and up to a factor 2 in the constant), so establishing~\eqref{eq:lowerRIP}
is sufficient in order to obtain the improved inequality~\eqref{eq:MainBoundExcessRisk}. 

The proof of Theorem~\ref{thm:LRIPsuff_excess} follows the structure outlined in the previous section, 
but requires to formally extend the result of \citep[Section IV-A]{Bourrier2014}
(leading from the LRIP~\eqref{eq:lowerRIP} to instance optimality~\eqref{eq:InstanceOptimality})
to the case of a hemimetric,
$\eta$-approximate LRIP and $\eps$-approximate decoder. These technical aspects are relegated to
Appendix~\ref{sec:ProofLRIPsuff_excess}.

\paragraph{Discussion:}
\begin{itemize}
\item Computing the sketch~\eqref{eq:ThmSketch2} is highly parallelizable and distributable. Multiple sketches can be easily aggregated and updated as new data become available.
\item As discussed in Remark~\ref{rk:nodensityestimation}, while~\eqref{eq:ThmDecoder2} may appear as a general nonparametric density estimation problem, in all %
the examples considered in this paper and the companion one \citeparttwo, 
it is indeed a nonlinear parametric least-squares fitting problem %
and the existence of the minimizer follows in practice from compactness arguments. 
\begin{itemize}
\item For {\bf Compressive PCA} (Section~\ref{sec:CompressivePCA}) it is a low-rank matrix reconstruction problem. Provably good algorithms to estimate its solution have been widely studied. 
\item For {\bf Compressive $k$-means} and {\bf Compressive Gaussian Mixture Modeling} (cf the companion paper \citeparttwo), the resulting optimization problem has been empirically addressed through the CL-OMPR algorithm \citep{Keriven2015,Keriven2016arxiv}. Algorithmic success guarantees are an interesting challenge. This is however beyond the scope of this paper. We note that the classic (non-compressed) $k$-means problem by minimization of the empirical risk is known to be NP-hard \citep{Garey:1982ks,Aloise2009} and that guarantees for approaches such as K-means++ \citep{Arthur2007} are only in expectation and with a logarithmic sub-optimality factor.
\end{itemize}
\item In Section~\ref{sec:choosemodel} we discuss choices of the model set $\Model$ that are driven by the learning task only and make the minimization problem~\eqref{eq:ThmLearn2} trivial to solve. With these choices the combined solution of~\eqref{eq:ThmDecoder2}-\eqref{eq:ThmLearn2} is explicitly turned into the minimization of a proxy for the risk, as in~\eqref{eq:DefRiskProxy}.
\item The second term in the bound~\eqref{eq:MainBoundExcessRisk} of the excess risk, $\eta_{\nSamples}$, is the empirical estimation error $\norm{ \SketchingOperatorProb(\Prob)-\SketchingOperatorProb(\empProb)}_{2}$.  It is easy to show that it decays as $1/\sqrt{\nSamples}$ when the data is drawn i.i.d. according to $\Prob$,
this will be done explicitly for the considered examples. 
\end{itemize}

For large collection size $n$ drawn i.i.d. according to $\Prob$, the term $\norm{\SketchingOperatorProb(\Prob)-\SketchingOperatorProb(\empProb)}_{2}$ becomes small and~\eqref{eq:MainBoundExcessRisk} shows that compressive learning will benefit from accurate excess risk guarantees provided the model $\Model$ and the feature function $\SketchingOperator$ (or equivalently the sketching operator $\SketchingOperatorProb$) are chosen so that:
\begin{enumerate}
\item 
the LRIP~\eqref{eq:lowerRIP_excess} holds; ideally for a ``small'' value of $\nMeasures$, as we also seek to design compact sketches and, eventually, tractable algorithms to learn from them.
\item  the distance $ \distIOPexgen_{\hyp^{\star}_{\Prob}}(\Prob,\Model)$ is ``small''; this vague notion will be exploited in Section~\ref{sec:choosemodel} below to guide our choice of model set $\Model$, and will be made more concrete on examples. %
\end{enumerate}

 We illustrate the improvement obtained for the bias term with respect to the coarser analysis on the toy
  example of median estimation considered previously. In that setting we have $\hyp^\star_{\Prob} = \Med(\Prob)$, and  for $\mProb = \delta_x \in \Model$:
  \begin{align*}
    \divp_{\hyp^{\star}}(\Prob\|\delta_x)
    & = \sup_{\hyp \in \HypClass} (
    \Exp_{\Sample \sim \Prob}\brac{\abs{X-h} - \abs{X-\Med(\Prob)}}
                                              - (\abs{x - h} - \abs{x-\Med(\Prob)}))  \\
    & = \Exp_{\Sample \sim \Prob}\brac{\abs{X-x}}
      - \Exp_{\Sample \sim \Prob}\brac{\abs{X-\Med(\Prob)}}
      + \abs{x-\Med(\Prob)}
  \end{align*}
  so that
  \begin{align*}
     \distIOPexgen_{\hyp^{\star}}(\Prob,\Model) &:= \inf_{\delta_x \in \Model}
    \paren{\divp_{\hyp^{\star}}(\Prob\|\delta_x)
      + (2+\nu)C_\SketchingOperatorProb \norm{\SketchingOperatorProb(\Prob) - \SketchingOperatorProb(\delta_x)}_2}.\\
    & =    \inf_{x \in \RR } \paren{\Exp_{\Sample \sim \Prob}\brac{\abs{X-x}}
      - \Exp_{\Sample \sim \Prob}\brac{\abs{X-\Med(\Prob)}}
      + \abs{x-\Med(\Prob)} 
      + (2+\nu) \abs{x - \Moy(\Prob)}}.\\
      & \leq \mathcal{B}(\Prob) + \abs{\Med(\Prob) - \Moy(\Prob)}.
  \end{align*}
  The inequality in the third line is obtained by using $x = \Moy(\Prob)$.
  Note that the presence of the last term is unavoidable (since $\abs{x-\Med(\pi_0)} + \abs{x - \Moy(\Prob)} \geq \abs{\Med(\pi_0) - \Moy(\Prob)} $), and that it is still larger than $\mathcal{B}(\Prob)$
     which we recall is the only bias term appearing in the direct calculation~\eqref{eq:directcalc}.
(A situation where it is much larger is the following: assume $\Prob = (\frac{1}{2}+\eps)\delta_{0} + (\frac{1}{2} - \eps) \delta_1$, for $0<\eps<1/2$.                         
  Then $\Med(\Prob) = 0$, $\Moy(\Prob) = \frac{1}{2}-\eps$, $\Exp_{\Sample \sim \Prob} \abs{X-\Moy(\Prob)} = (\tfrac{1}{2}+\eps)(\tfrac{1}{2}-\eps)+(\tfrac{1}{2}-\eps)(\tfrac{1}{2}+\eps) \
= (1+2\eps)(\tfrac{1}{2}-\eps)$, $\Exp_{\Sample \sim \Prob} \abs{X-\Med(\Prob)} = \Exp_{\Sample \sim \Prob} X = \tfrac{1}{2}-\eps$, hence $\abs{\Med(\Prob)-\Moy(\Prob)} = \tfrac{1}{2}\
-\eps$ while $\mathcal{B}(\Prob) = 2\eps(1/2-\eps)$.) In this sense, even using the improved excess risk analysis,
  the general bound~\eqref{eq:MainBoundExcessRisk} can lack some tightness. It is nevertheless much sharper
  than the bound~\eqref{eq:firstbound}, in particular $ \distIOPexgen_{\hyp^{\star}_{\Prob}}(\Prob,\Model)=0$ as soon as $\Moy(\Prob)=\Med(\Prob)$,
  while $ \distIOPexgen_{\hyp^{\star}_{\Prob}}(\Prob,\Model) >0$ in general in this case, see~\eqref{eq:firstbound}.

\section{Task-driven model sets}\label{sec:choosemodel}
An important ingredient of the proposed framework is the model set $\Model$ and we now discuss its choice. 
\rev{If prior knowledge on the data distribution is available, it is of course possible to choose $\Model$ to incorporate such knowledge into compressive statistical learning. } 
However, it is more common in statistical learning to seek ``distribution-free'' statistical guarantees.
\rev{Therefore, it may be more desirable to derive a model set entirely or mostly from the learning task itself, to the extent possible.}

\rev{To begin with, we show that for any model set $\Model$, the abstract two-step learning mechanism (cf steps~\eqref{eq:ThmDecoder2}-\eqref{eq:ThmLearn2} in Theorem~\ref{thm:LRIPsuff_excess}) can be written as the minimization of a more explicit proxy~\eqref{eq:DefRiskProxy} for the empirical risk. This rewriting exploits a partition of $\Model$ into certain submodels $\Model_{\hyp}$ driven by the learning task, i.e. by the loss family $\{\loss(\cdot,\hyp)\}_{\hyp \in \HypClass}$. For certain learning tasks,  we further show that the submodels $\Model_{\hyp}$ and the corresponding proxy~\eqref{eq:DefRiskProxy} have a simple expression provided we choose a ``natural'', task-driven, model set $\Model$.
Finally, for so-called ``compression type'' learning tasks, choosing such a task-driven model set allows to control
the bias term in the excess risk~\eqref{eq:MainBoundExcessRisk} by a function of the optimal risk $\Risk(\Prob,\hyp^{\star})$.}

\subsection{Learning from a sketch without explicit density estimation}
\label{sec:explicitriskproxy}
Consider a family $\Model$ of $\LossClass$-integrable distributions and assume for each $\Prob \in \Model$ the risk admits a minimizer, i.e., there is $\hyp \in \HypClass$ such that $\Risk(\Prob,\hyp) = \inf_{\hyp' \in \HypClass} \Risk(\Prob,\hyp')$. When this holds the model set $\Model$ can be decomposed as $\Model = \cup_{\hyp \in \HypClass} \Model_{\hyp}$ where for each hypothesis $\hyp \in \HypClass$ we define
\begin{equation}\label{eq:DefSimpleModelSetGivenHyp}
\Model_{\hyp} := \set{\Prob \in \Model: \Risk(\Prob,\hyp) \leq \Risk(\Prob,\hyp'), \forall \hyp' \in \HypClass},
\end{equation}
and the hypothesis $\hat{\hyp}$ selected using steps~\eqref{eq:ThmDecoder2}-\eqref{eq:ThmLearn2} in Theorem~\ref{thm:LRIPsuff_excess} (with $\varepsilon'=0$) %
is equivalently obtained as a near-minimizer of the following proxy for the risk
\begin{equation}\label{eq:GenericRiskProxy}
\proxyRisk(\vy,\hyp) := \inf_{\mProb \in \Model_\hyp} \norm{\SketchingOperatorProb(\mProb)-\vy}_{2}\, ,
\end{equation}
in the sense that $\inf_{\mProb \in \Model}\norm{\SketchingOperatorProb(\mProb)-\vy}_{2} = \inf_h \inf_{\mProb \in \Model_\hyp} \norm{\SketchingOperatorProb(\mProb)-\vy}_{2}$. With this expression in hand, it is possible to directly cast the estimation of $\hat\hyp$ as \eqref{eq:DefRiskProxy}. 
To turn this into a concrete proxy for the risk it is helpful to consider a model set $\Model$ such that $\Model_{\hyp}$ has a simple characterization.

\subsection{Choosing a model set: with or without prior knowledge ?}
\label{se:leastrestrictedmodel}
Learning tasks such as maximum likelihood estimation directly involve a natural model set for which $\Model_{\hyp}$ as in \eqref{eq:DefSimpleModelSetGivenHyp} is easily characterized. Consider the loss $\loss(\sample,\hyp) = -\log \Prob_{\hyp}(\sample)$ with $\set{ \Prob_{\hyp}, \hyp \in \HypClass}$ a parameterized family of distributions with $\Prob_{\hyp'} \neq \Prob_{\hyp}$ for $\hyp' \neq \hyp$. 
\rev{In this setting, it is natural to consider the following model set:
 \begin{equation}\label{eq:DefNaturalMLModel}
\ModelML(\HypClass) := \set{\Prob_{\hyp}: \hyp \in \HypClass},
\end{equation}
which is nothing more than a statistical model in the usual sense.
Moreover, up to a constant additive term, it then holds $\Risk(\Prob,h) = \KLdiv{\Prob}{\Prob_h}$, where $\KLdiv{\cdot}{\cdot}$ is the Kullback-Leibler divergence,
so that for any $h \in \HypClass$:
\begin{align*}
  \ModelML_{\hyp} & = \set{\Prob \in \ModelML: \KLdiv{\Prob}{\Prob_\hyp} \leq \KLdiv{\Prob}{\Prob_\hyp'}, \forall \hyp' \in \HypClass}\\
                & =  \set{\Prob_{\hyp''}, \hyp'' \in \HypClass: \KLdiv{\Prob_{\hyp''}}{\Prob_\hyp} \leq \KLdiv{\Prob_{\hyp''}}{\Prob_{\hyp'}}, \forall \hyp' \in \HypClass}\\
  & = \set{\Prob_h},
\end{align*}
see \citep[Chapter 9]{CT91}.
As a conclusion, the proxy~\eqref{eq:GenericRiskProxy} reads $\proxyRisk(\vy,\hyp) =  \norm{\SketchingOperatorProb(\Prob_\hyp)-\vy}_2$.
}

For many other learning tasks, the choice of the model set $\Model$ results from a tradeoff between several needs. 
On the one hand, results from compressed sensing suggest that given a model set $\Model$ that has proper ``low-dimensional'' properties, it is possible to choose a small sketch size $\nMeasures$ and design the sketching operator $\SketchingOperatorProb$ such that the LRIP~\eqref{eq:lowerRIP_excess} holds, and the ideal decoder $\Delta$ in~\eqref{eq:DefIdealDecoder} --- or its relaxed version in~\eqref{eq:ThmDecoder2} --- is guaranteed to stably recover probability distributions in $\Model$ from their compressed version obtained with $\SketchingOperatorProb$. This calls for the choice of a ``small'' model set. On the other hand, and perhaps more importantly, the model set should not be ``too small'' in order to ensure that the obtained control of the excess risk is nontrivial. 

Ideally, in the common case of compression-type tasks, as defined below, the bias term in the excess risk~\eqref{eq:MainBoundExcessRisk} should be small when the true optimum risk is small, and even vanish when the true optimum risk vanishes, i.e. when $\inf_{\hyp \in \HypClass} \Risk(\Prob,\hyp) = 0$. 

  \begin{definition} \label{def:comptypetask}
    We call the learning task a {\em compression-type task} if the loss can be written as
   $\loss(\sample,\hyp) = \divg^{p}(\sample,P_{\hyp}\sample)$, where $d$ is a metric on $\SampleSpace$, $p>0$, and
$P_{\hyp}: \SampleSpace \to \SampleSpace$ is a ``projection function'', i.e.,
     \begin{align}
       P_{\hyp} \circ P_{\hyp} &= P_{\hyp};\label{proj1}\\
       \divg(x,P_{\hyp}x) & \leq  \divg(x,P_{\hyp}x'),\quad \forall x,x' \in \SampleSpace.\label{proj2} 
     \end{align}
 \end{definition}
Typical examples of compression-type tasks are PCA, $k$-means, and $k$-medians. 
For PCA, $P_{\hyp}$ is the orthogonal projector onto subspace $\hyp$.
For $k$-means and $k$-medians, $P_{\hyp}$ maps $\sample \in \SampleSpace = \RR^{\sampleDim}$ to the closest center $c_{i}$ from $\hyp = (c_{1},\ldots,c_{k})$, with ties broken arbitrarily. In other words, given an arbitrary \emph{Voronoi partition} corresponding to $k$ disjoint sets $W_{j}$ such that $\cup_{j} W_{j} = \RR^{\sampleDim}$ and $d(x,c_{j}) = \min_{l} d(x,c_{l})$ for each $x \in W_{j}$, $P_{\hyp}\sample = c_{j}$ if and only if $\sample \in W_{j}$.
Manifold learning tasks where $P_{\hyp}$ is a projection onto a manifold parameterized by $\hyp$ (with ties broken arbitrarily) would also fit under this framework.

For a compression-type task, a natural model set is the family of $\LossClass$-integrable probability distributions 
\begin{equation}
\label{eq:DefLeastRestrictedModel}
\ModelCT(\HypClass) := \cup_{\hyp \in \HypClass} \ModelCT_{\hyp}\qquad \text{where}\quad
\ModelCT_{\hyp} :=  \set{\Prob: \Risk(\Prob,\hyp) = 0}.
\end{equation}
We consider a few examples:
\begin{itemize}
 \item {\bf Compressive PCA}: the model set $\ModelCT(\HypClass)$ consists of all distributions which admit a matrix of second moments of rank at most $k$. Given any $\estProb \in \ModelCT(\HypClass)$, a minimum risk hypothesis according to~\eqref{eq:defEstHyp} is any subspace $\hat{\hyp}$ spanned by eigenvectors associated to the $\PCAdim$ largest eigenvalues of $\Cov_{\estProb}$. More details will be given shortly in Section~\ref{sec:CompressivePCA}.
 \item {\bf Compressive $k$-means or $k$-medians}: the model set $\ModelCT(\HypClass)$ consists of mixtures of $k$ Diracs. Given $\hyp = \set{c_{1},\ldots,c_{k}}$ and any $\estProb = \sum_{\ell=1}^{\PCAdim}\alpha_{\ell}\delta_{c_{\ell}} \in \ModelCT_\hyp$, a minimum risk hypothesis according to~\eqref{eq:defEstHyp} is $\hat{\hyp} = \hyp$.  Since $\SketchingOperatorProb(\delta_{c}) = \SketchingOperator(c)$, the proxy~\eqref{eq:GenericRiskProxy} reads
 \begin{equation}\label{eq:RiskProxyKMeans}
\proxyRisk(\vy,\hyp) = \min_{\alpha \in \Simplex_{\PCAdim-1}}  \norm{\sum_{\ell=1}^k\alpha_\ell \SketchingOperator(c_\ell)-\vy}_2
\end{equation}
with $\Simplex_{\PCAdim-1} := \set{\alpha \in \RR^{\PCAdim}: \alpha_\ell \geq 0; \sum_{\ell=1}^{k}\alpha_{\ell}=1}$ the simplex.

\end{itemize}
For compressive PCA we exhibit in Section~\ref{sec:CompressivePCA} a feature function $\SketchingOperator$ so that $\SketchingOperatorProb$ satisfies the LRIP~\eqref{eq:lowerRIP_excess} with respect to the model set $\Model = \ModelCT(\HypClass)$. The same is done in the companion paper \citeparttwo\ for compressive $k$-means, compressive $k$-medians and compressive Gaussian mixture modeling.

\subsection{Controlling the bias term for compression-type tasks}\label{sec:biascontrol}
\rev{The \emph{bias term} --- defined in~\eqref{eq:DefMDist2}} ---
is a measure of distance to the model set $\Model$. For compression-type tasks, the particular model set $\Model =\ModelCT(\HypClass)$ in \eqref{eq:DefLeastRestrictedModel} was designed so that this \rev{bias term}
vanishes when $\Prob \in \Model$, \rev{and we} %
can further bound the bias term $ \distIOPexgen_{\hyp^{\star}_{\Prob}}(\Prob,\ModelCT(\HypClass))$ 
with an increasing function of the true minimum risk, $\Risk(\Prob,\hyp^{*})$. This leads to recovery guarantees providing {\em distribution-free} excess risk guarantees. Whether this  holds for other learning tasks, or even generically, is a challenging question left to further work. 

The following lemmas allow to obtain an upper bound of the bias term in function of the minimum
  risk in a number of relevant cases.
For a probability distribution $\Prob$ on $\SampleSpace$ and $P_{\hyp}$ as in Definition~\ref{def:comptypetask}, we denote $P_\hyp\Prob$ the  push-forward of $\Prob$ through $P_\hyp$, i.e., the
probability distribution of a random variable $Y = P_\hyp X$ where $X \sim \Prob$. 
Given a loss class $\LossClass(\HypClass)$ we recall that $\hyp^{\star}_{\Prob} = \arg\min_{\hyp \in \HypClass} \Risk(\Prob,\hyp)$.
  \begin{lemma}\label{lem:LemmaBiasTerm}
Consider a compression-type task on the input space $\SampleSpace$. Then 
    \begin{itemize}          
    \item    $\ModelCT_\hyp$ is the set of probability distributions on $X \in \SampleSpace$ such that $X \in \mathcal{E}_{\hyp}:=P_\hyp\SampleSpace$ almost surely.
    With the model set $\ModelCT(\HypClass)$ and the loss class $\LossClass(\HypClass)$, the bias term~\eqref{eq:DefMDist2} satisfies (for any $\hyp_0\in \HypClass$)
      \begin{align}
        \label{eq:biasbound}
        \distIOPexgen_{\hyp_0}(\Prob,\ModelCT(\HypClass)) 
        & \leq
\divp_{\hyp_0}(\Prob\|P_{\hyp_0}\Prob)
   + (2+\nu)C_\SketchingOperatorProb \norm{\SketchingOperatorProb(\Prob) - \SketchingOperatorProb(P_{\hyp_0}\Prob)}_2.
\end{align}
\item If $d^{p}$ is a metric (in particular if $p \leq 1$)
then $\divp_{\hyp}(\Prob\|P_\hyp\Prob)=0$ for any $\hyp \in \HypClass$ and $\LossClass$-integrable distribution $\Prob$.%
  \end{itemize}
  \end{lemma}
 
     \begin{remark}
When $d^{p}$ is not a metric there are $u,v,w \in \SampleSpace$ such that $d^{p}(u,v) > d^{p}(u,w)+d^{p}(w,v)$. The loss $\loss(\sample,\hyp) := d^{p}(\sample,\hyp)$, with $\HypClass := \set{v,w}$, defines a compression-type task with $P_{\hyp}x = \hyp$ for all $\sample \in \SampleSpace$. Set $\Prob = \delta_{u}$. Since $d(u,w) < d(u,v)$ we have $\hyp^{\star}_{\Prob} = w$. We also have for $\hyp = v \in \HypClass$
\begin{align*}
\divp_{\hyp^{\star}_{\Prob}}(\Prob\|P_{\hyp^{\star}_{\Prob}}\Prob)
\geq 
\Delta \mathcal{R}_{w}(\Prob,\hyp)-\Delta \mathcal{R}_{w}(P_{w}\Prob,\hyp)
&= [d^{p}(u,\hyp) - d^{p}(u,w)]-[d^{p}(w,\hyp)-d^{p}(w,w)]\\
&= d^{p}(u,v)-d^{p}(u,w)-d^{p}(w,v) > 0.
\end{align*}
Hence, one cannot generically obtain $\divp_{\hyp}(\Prob\|P_\hyp\Prob)=0$, not even with the restriction
$\hyp = \hyp^{\star}_{\Prob}$. 
\end{remark}

For  $p=2$, $d$ the Euclidean distance on $\RR^{\sampleDim}$, and $\hyp_0=\hyp^{\star}_\Prob$ (which we recall is generally the primary interest case since
  our main bound~\eqref{eq:MainBoundExcessRisk} then gives a control of the excess risk with
  respect to the optimum), we still have
    $\divp_{\hyp^{\star}_\Prob}(\Prob\|P_{\hyp^{\star}_{\Prob}}\Prob)=0$ for certain tasks. In light of the above remark, this is a nontrivial property which is established for PCA in Lemma~\ref{lem:BiasPCA}, and  for $k$-means in the companion paper \citeparttwo. Beyond these specific situations, it is possible (under somewhat generic
  additional assumptions) to bound the two terms appearing in~\eqref{eq:biasbound} by (a power of) the risk itself, as established next.
    \begin{lemma}\label{lem:LemmaBiasTermBis}
Consider a compression-type task where $(\SampleSpace,d)$ is a \emph{separable} metric space. 
        Then 
  \begin{itemize}
\item Assume that $\SampleSpace$ has $d$-diameter bounded by $B$. Then for any $p> 1$, $h\in \HypClass$ and $\LossClass$-integrable distribution $\Prob$: %
  \begin{equation}
    \label{eq:rboundgeneralcasepgeq1}
    \divp_{\hyp}(\Prob\|P_\hyp\Prob) \leq 2p B^{p-1} \Risk(\Prob,\hyp)^{\frac{1}{p}}.
    \end{equation}
\item Assume that $\SketchingOperator: (\SampleSpace,d) \to (\RR^{\nMeasures},\norm{\cdot}_{2})$ (or $(\CC^{\nMeasures},\norm{\cdot}_{2})$) is $L$-Lipschitz. Then, for $\hyp \in \HypClass'$ and $p\geq 1$:
  \begin{equation}
    \label{eq:skopboundpgeq1}
    \norm{\SketchingOperatorProb(\Prob) - \SketchingOperatorProb(P_\hyp\Prob)}_2 \leq L \inf_{\hyp \in \HypClass} \Risk(\Prob,\hyp)^{\frac{1}{p}}.
  \end{equation}
  For $\hyp \in \HypClass$ and $p \leq 1$, if the space $\SampleSpace$ has $d$-diameter bounded by $B$:
  \begin{equation}
    \label{eq:skopboundpleq1}
    \norm{\SketchingOperatorProb(\Prob) - \SketchingOperatorProb(P_\hyp\Prob)}_2 \leq L B^{1-p} \inf_{\hyp \in \HypClass} \Risk(\Prob,\hyp).
  \end{equation}
\end{itemize}
\end{lemma}
The proofs of the above lemmas are in Appendix~\ref{sec:proofbiasterm}. For Lemma~\ref{lem:LemmaBiasTermBis}, optimal transport is exploited through connections between the considered norms and the norm $\norm{\Prob-\Prob'}_{\LipClass(L,d)} = L\cdot \norm{\Prob-\Prob'}_{\LipClass(1,d)}$, where $\LipClass(L,d)$ denotes the class of functions $f: (\SampleSpace,d) \to \RR$ that are $L$-Lipschitz. 
The two lemmas can be combined to express an ``explicit'' bound on $ \distIOPexgen_{\hyp^{\star}_{\Prob}}$, this is postponed to concrete examples.

\section{Illustration with Compressive PCA}\label{sec:CompressivePCA}

As a first simple illustration, this general compressive statistical framework can be applied to the example of PCA, where most of the tools already exist. Our aim is essentially illustrative, and focuses on controlling the excess risk, rather than to compare the results with state-of-the art PCA techniques. 

\paragraph{Definition of the learning task.}
The risk associated to the PCA learning problem is defined\footnote{for simplicity we assume centered distributions $\Exp_{\Sample\sim \Prob}\Sample=0$ and
  don't empirically recenter the data.} as $ \kPCArisk(\Prob,\hyp) = \Exp_{\Sample \sim \Prob}  \norm{\Sample-P_{\hyp} \Sample}_{2}^{2}$ with $P_{\hyp}$ the orthogonal projector onto subspace $\hyp$. 
It is minimized
by any subspace $\hyp^{\star}_{\Prob}$ associated with $\PCAdim$ largest eigenvalues of the matrix $\Cov_{\Prob} = \Exp_{\Sample \sim \Prob} \Sample\Sample^{T}$.  

It is well established \citep{FouRau13} that matrices that are approximately low-rank  can be estimated from partial linear observations under a certain Restricted Isometry Property (RIP). This leads to the following natural way to perform Compressive PCA.

\paragraph{Choice of a model set.} The ''natural'' model set from~\eqref{eq:DefLeastRestrictedModel} is $\ModelCT(\HypClass) = \{\Prob: \textrm{rank}(\Cov_{\Prob}) \leq k\}$. More generally we can consider as a model set $\Model_{r} := \{\Prob: \textrm{rank}(\Cov_{\Prob}) \leq r\}$, with $r \geq k$, so that $\Model_{r} \supset \ModelCT(\HypClass)$.

\paragraph{Choice of feature function.}  Choose (at random) a linear operator $\mathcal{M}: \RR^{\sampleDim \times \sampleDim} \to \RR^{\nMeasures}$ satisfying (with high probability) the RIP on low-rank matrices: for any $\mathbf{M} \in \RR^{\sampleDim \times \sampleDim}$ of rank at most $2r$, 
 \begin{equation}\label{eq:sym_RIP}
  1-\delta \leq \frac{\norm{\mathcal{M}(\mathbf{M})}_{2}^{2}}{\norm{\mathbf{M}}_{F}^{2}} \leq 1+\delta
  \end{equation}
with $\norm{\cdot}_{F}$ the Frobenius norm and $\delta < 1$. This is feasible with $m$ of the order of $r \sampleDim$, by taking the Frobenius inner product of $\mathbf{M}$ with $m$ independent random Gaussian matrices \citep[see e.g.][]{FouRau13}. 

Given these facts one can define the feature function as
 $\SketchingOperator: \SampleSpace = \RR^{\sampleDim} \rightarrow \RR^{\nMeasures}$ by
  \(
  \SketchingOperator(\sample) := \mathcal{M}(\sample \sample^{T}).
  \) 
  
  \paragraph{Sketch computation.}  Given sample points $\sample_1,\ldots,\sample_\nSamples$ in $\RR^\sampleDim$, compute the sketch $\vy$ as in~\eqref{eq:GenericSketching},
i.e., compute empirical estimates of random second moments of the distribution $\Prob$ of $\Sample$. 

\paragraph{\bf Learning from a sketch.}    Given a sketch vector $\vy$, estimate a solution of the optimization problem over positive semi-definite (p.s.d.) symmetric matrices ($\Cov \succcurlyeq 0$)
\begin{equation}\label{eq:PCAIdealDecoder}
  \hat{\Cov} := \arg\min_{\textrm{rank}(\Cov) \leq \rev{r}, \Cov \succcurlyeq 0} \norm{\mathcal{M}(\Cov)-\vy}^2_2.
\end{equation}
This step estimates the rank-$\rev{r}$ p.s.d. matrix whose sketch best matches that of the empirical matrix of second moments, in the least squares sense. Compute the eigen-decomposition $\hat{\Cov} = \mathbf{U}\mathbf{D}\mathbf{U}^{T}$ and output 
\begin{equation}\label{eq:PCAIdealDecoderBestHyp}
  \hat{\hyp} := \textrm{span}(\mathbf{U}(:,1:\PCAdim)).
\end{equation}
In Appendix~\ref{sec:PCA_theory} we control the excess risk of PCA
by relating the excess risk divergence $\divp_{\hyp_0}(\Prob\|\Prob')$ --- with $\hyp_{0} \in \HypClass$ an arbitrary hypothesis --- to the Frobenius
  norm $\norm{\Cov_\Prob-\Cov_{\Prob'}}_F$, and upper bounding the ``bias'' term~\eqref{eq:DefMDist2} appearing in
the generic bound of Theorem~\ref{thm:LRIPsuff_excess}, to obtain the following result:
\begin{theorem}\label{th:MainTheoremPCA}
Consider any probability distribution $\Prob$ with finite second moments  and any draw of $\sample_{i}$,
  $1 \leq i \leq \nSamples$ (represented by the empirical distribution $\empProb$). Applying the above approach yields, for any $s$, $1\leq s \leq r$:
\begin{equation} \label{eq:generalPCARiskBound}
  \kPCArisk(\Prob,\hat{\hyp}) - \kPCArisk(\Prob,\hyp^{\star}_{\Prob})
  \leq  c_\delta \sqrt{\frac{k}{s}} \sum_{j \geq r-s+2} \lambda_j(\Cov_{\Prob})+
  c'_\delta\sqrt{k}\norm{\mathcal{M}(\Cov_{\Prob}-\Cov_{\empProb})}_{2},
\end{equation}
where $\lambda_i(\Cov_{\Prob})$ are the eigenvalues of $\Cov_{\Prob}$ ranked in
decreasing order (with multiplicity), $c_\delta := 2\sqrt{2}\frac{\sqrt{1+\delta}}{\sqrt{1-\delta}} $, $c'_\delta :=2\sqrt{2}/\sqrt{1-\delta}$.
In particular:
\begin{equation}
  \label{eq:MainPCARiskBound}
  \kPCArisk(\Prob,\hat{\hyp}) - \kPCArisk(\Prob,\hyp^{\star}_{\Prob})
 \leq  c_\delta \sqrt{\frac{k}{r-k+1}}  \kPCArisk(\Prob,\hyp^{\star}_{\Prob})+  c'_\delta \sqrt{k}  
 \norm{\mathcal{M}(\Cov_{\Prob}-\Cov_{\empProb})}_{2}.
\end{equation}

\end{theorem}
\paragraph{Discussion:}
\begin{itemize}
\item {\bf Bias term.} The first term in the right hand side of~\eqref{eq:MainPCARiskBound} is a bias term that vanishes when the true risk is low. %
 Since it is proportional to the true risk, it leads to the (non-sharp) oracle inequality
$\kPCArisk(\Prob,\hat{\hyp}) \leq C_\delta(k,r) \kPCArisk(\Prob,\hyp^{\star}_{\Prob}) + c'_\delta\sqrt{k} \norm{\mathcal{M}(\Cov_{\Prob}-\Cov_{\empProb})}_{2}$. We  show in \citeparttwo, (using
Lemma~\ref{lem:LemmaBiasTermBis} and \eqref{eq:skopboundpleq1}) that this type of property also holds for Compressive $k$-medians; %
for Compressive $k$-means 
 we prove similar properties where the bias term is bounded by the square root of the true risk (using \eqref{eq:rboundgeneralcasepgeq1},\eqref{eq:skopboundpgeq1}). \\ It is notable that the bias multiplier $C_\delta(k,r)$ is of order $\sqrt{k}$
   if we use the natural model set ($r=k$), but drops
   to a constant independent of $k$ as soon as we choose e.g. the
   larger model set $\Model_r$ with $r=2k$.
    Thus, there appears to be a clear advantage, in the sense
   of the obtained bound, in choosing a reconstruction model that is
   larger than the natural model set $\ModelCT(\HypClass)$, while not significantly changing
   the magnitude of the number of required data sketches. At this point it is
   unclear to us if the inflation of the bias factor for the natural model is
   unavoidable or is just a technical artefact.
\item {\bf Sample complexity.} Regarding the second term, if we further assume that the support of $\Prob$ is contained in a Euclidean ball of radius $R$, then by the RIP~\eqref{eq:sym_RIP} we have a.s. 
$\norm{\mathcal{M}(\sample\sample^{T})}_{2} \leq \sqrt{1+\delta} \cdot R^{2}$
hence, by the vectorial Hoeffding's inequality \citep[see e.g.][]{Pinelis92}, we obtain with high probability w.r.t. data sampling that
$\sqrt{k} \norm{\mathcal{M}(\Cov_{\Prob})-\mathcal{M}(\Cov_{\empProb})}_{2}$ is of the order of
$R^2 \sqrt{k/n}$.
\item {\bf Root-$n$ consistency in a high-dimensional scenario}. 
    As noticed in the previous point, the statistical estimation error term in
    the sketched learning bound is of order $\sqrt{\PCAdim/\nSamples}$. Consider a high-dimensional situation where $\sampleDim$ is large and growing with $\nSamples$, and assume a polynomial spectral decay   $\lambda_j(\Cov_{\Prob}) \leq j^{-\alpha}$ with $\alpha>1$. Then by choosing $s=r/2$, the 
     bias term in~\eqref{eq:generalPCARiskBound} is of order $\sqrt{kr^{-(2\alpha-1)}}$.
     As a consequence, it is sufficient to take $r$ of order %
     $\min(d,n^{\frac{1}{2\alpha-1}})$
     so as to ensure that the bias term is at most of
     the same order as the statistical error term. This gives an advantage compared
     to the standard approach of
     storing all $d^2$ second order moments, as soon as $d> n^{\frac{1}{2\alpha-1}}$.
     For comparison, standard statistical analysis of PCA based on
     the uniform bound on the deviations of the empirical risk from its expectation
     (see e.g. \citealp{shawe2005}) leads
    to a control of order\footnote{More refined techniques \citep{blanchard2007,reiss2016} can lead to a convergence
       rate of the PCA excess risk of order $n^{-1}$ asymptotically, for $\PCAdim$ fixed, but depending on eigenvalue gaps. For the present discussion we compare
       ourselves to the simplest analysis available in the standard learning context.}
     $\sqrt{\PCAdim/\nSamples}$. Hence, using the sketched approach we can reduce
     storage/memory imprint significantly while keeping  statistical
     guarantees of the same order as in the standard setting.
   \item \rev{{\bf Relation to efficient kernel-PCA methods.} Methods in recent literature have been proposed
       to make kernel PCA more efficient, relying on Nyström subsampling \cite{Sterge:2020kpca} or
       on approximation of the kernel using random features \citep{Ullah:2018nips,Sriperumbudur:2020approx}. Even
       when restricting attention to a linear kernel, a direct comparison to our approach proves delicate. The Nyström subsampling
       method is very much taylored to the dual point of view, which is canonical in kernel methods
       (i.e. approximation of the kernel Gram matrix): this implicitly posits to store all training points in order to
       represent the output of the method as a kernel expansion (the computational gain concerns the storage and
       manipulation of the $(n,n)$ Gram matrix). On the other hand, the random feature approach could be construed as closer in spirit
       to ours, however rather than storing generalized moments as we do, it is in essence a (random) dimension reduction
       of the individual input points from kernel space to a finite-dimensional feature space where regular PCA is performed. Also the theoretical works
       of \cite{Ullah:2018nips,Sriperumbudur:2020approx} concern reconstruction in $L^2(\Prob)$ space and not in the original
       kernel space norm (lifting back the PCA projection found in approximate feature space into original kernel space
       proves a delicate question). In contrast, we consider sketching the data into generalized empirical moments
       but propose a reconstruction method directly in the relevant space with theoretical guarantees.
       It is to be noted however, that the
       results of \cite{Ullah:2018nips} posit a black-box PCA method applied in the finite (but high-)dimensional
       approximate feature space, and one could apply a sketching approach at this stage. This suggests the interesting proposal that efficient
       kernel PCA methods could be combined, rather than be in competition with, the sketching approach, though we
     did not push the idea further.}
   \end{itemize}

\paragraph{\bf Practical algorithms for learning and comparison to prior PCA-specific results.} One can consider several relaxations of the nonconvex optimization problem~\eqref{eq:PCAIdealDecoder} in order to perform compressive PCA. Beside convex relaxations using the minimization of the nuclear norm \citep[Section 4.6]{FouRau13}, \citet{Kabanava_2016} showed (in a complex-valued setting) that the rank constraint in~\eqref{eq:PCAIdealDecoder} can be relaxed when $\mathcal{M}$ is made of random rank-one projections, i.e. when $\SketchingOperator(\sample)= \frac{1}{\sqrt{m}} (|\inner{a_j,\sample}|^2)_{j=1,m}$ where $a_j \in \CC^{\sampleDim}$ are independent standard complex Gaussian vectors. In this setting, let
\begin{equation}
  \label{eq:PCAconvrelax}
  \hat{\Cov} := \arg\min_{\Cov \succcurlyeq 0} \norm{\mathcal{M}(\Cov)-\vy}^2_2,
\end{equation}
and the corresponding hypothesis $\hat{\hyp}$ obtained through~\eqref{eq:PCAIdealDecoderBestHyp}.
Combining \citep[Theorem~4 with $p=2$]{Kabanava_2016} with Equation~\eqref{eq:bdivfrob} in Section~\ref{sec:PCA_theory} and Equation~\eqref{eq:main-erm-ineq2} in Section~\ref{sec:ProofLRIPsuff_excess}, we have the following result: if $\nMeasures \geq C \PCAdim \sampleDim$ where $C$ is a universal constant, then with high probability on the draw of the $a_j$, for any $\sample_1,\ldots,\sample_n$, we have the control
\[
 \kPCArisk(\Prob,\hat{\hyp}) - \kPCArisk(\Prob,\hyp^{\star})
    \leq \norm{\hat{\Cov}-\Cov_{\Prob}}_F \leq  
    D_1  \kPCArisk(\Prob,\hyp^{\star}) +  D_2 \sqrt{k} \norm{\SketchingOperatorProb(\Prob)-\SketchingOperatorProb(\empProb)}_{2},
\]
where $D_1,D_2$ are positive universal constants that do not depend on $k$.

Hence, provided we use a model set of dimension $2r$ as discussed above, the error control~\eqref{eq:MainPCARiskBound} obtained
  via our general approach matches what can be obtained using directly the PCA-specific study of \citet{Kabanava_2016}.
  Two practical advantages of the latter are (a) that~\eqref{eq:PCAconvrelax} is a convex program,
  and (b) that the sketches are made using rank-one matrices, which are cheaper to store. 
Still, the guarantees obtained by our general approach is able to
match prior results for setting-specific methods. It will further permit the study of the less trivial setting of compressive clustering and compressive Gaussian mixture estimation as shown in the companion paper
\citeparttwo.

\section{Establishing the LRIP for random sketching operators}\label{sec:ChoiceSketch}

In this section, we investigate how to establish the LRIP \eqref{eq:lowerRIP_excess} (with $\eta=0$)
when the sketching operator $\SketchingOperatorProb$ is associated to random features. 
The approach uses connections with the notion of kernel mean embedding of probability distributions.

\subsection{Random features and kernel mean embeddings}\label{sec:Kernels}

\begin{definition}[Random feature map]\label{def:RFmap}
Consider $\FClass:=\{\rfeat\}_{\freq \in \freqSpace}$ a parameterized family of (real- or complex-valued) measurable functions, $\freqdist$  a probability distribution $\freqdist$ over the parameter set $\freqSpace$ (often $\freqSpace=\RR^\sampleDim$), and a sketch size $\nMeasures$.  A random feature map is defined by drawing $\nMeasures$ i.i.d parameters $(\freq_{j})_{j=1}^{\nMeasures}$ according to $\freqdist$ and setting
\begin{equation}\label{eq:DefRandomFeatureFunction}
\SketchingOperator(\sample) := \tfrac{1}{\sqrt{m}} \left(\rfeatj(\sample)\right)_{j=1,m}.
\end{equation}
\end{definition}
Any draw of the feature function $\SketchingOperator$ defines a positive semi-definite kernel between samples $\kappa_{\SketchingOperator}(\sample,\sample') := \langle \SketchingOperator(\sample),\SketchingOperator(\sample')\rangle_{\RR^{\nMeasures}}$ (or $\langle \SketchingOperator(\sample),\SketchingOperator(\sample')\rangle_{\CC^{\nMeasures}}$). Compressive learning is deeply connected to kernel mean embeddings of probability distributions, as the related sketching operator $\SketchingOperatorProb$  defines a so-called \emph{kernel mean embedding} between probability distributions which are $\FClass$-integrable.

\begin{definition}[Kernel mean embedding, Mean Map Discrepancy \citep{Gretton2007, Sriperumbudur2010}]\label{def:MMD}
Any positive semi-definite kernel $\kappa(\cdot,\cdot)$ in the sample space is associated to a Mean Map Embedding (a kernel between distributions). By abuse of notation, we keep the notation $\kernel$ for both the expression of the kernel in the sample space and of the corresponding kernel for probability distributions with appropriate integrability
\begin{equation}
\label{eq:DefMeanMapEmbedding}
\kappa(\Prob,\Prob') := \Exp_{\Sample \sim \Prob} \Exp_{\Sample' \sim \Prob'} \kernel(\Sample,\Sample').
\end{equation}
The associated Maximum Mean Discrepancy (MMD) metric is
\begin{equation}\label{eq:DefMMD}
\norm{\Prob-\Prob'}_{\kappa} := \sqrt{\kappa(\Prob,\Prob)-2\operatorname{Re}(\kappa(\Prob,\Prob'))+\kappa(\Prob',\Prob')}.
\end{equation}
\end{definition}

The average kernel $\kernel$ associated to $(\FClass,\freqdist)$, will play a key role in establishing the LRIP.
Given  $\sample,\sample' \in \SampleSpace$,  the expectation of $\kappa_{\SketchingOperator}(\sample,\sample') = \frac{1}{m} \sum_{j=1}^{m} \rfeat_{j}(\sample)\overline{\rfeat_{j}}(\sample')$ over the draws of $\freq_{j}$ is
\begin{equation}\label{eq:DefIntegralRepresentation}
  \kernel(\sample,\sample')=\Exp_{\freq\sim\freqdist} \kernel_\SketchingOperator(\sample,\sample')=
 \Exp_{\freq\sim\freqdist}\rfeat(\sample)\overline{\rfeat(\sample')}.
\end{equation}
Similarly, given $\Prob,\Prob'$, the squared MMD $\normkern{\Prob-\Prob'}^{2}$ with this kernel is the expectation of 
\[
\norm{\Prob-\Prob'}_{\kappa_{\SketchingOperator}}^{2} =  
\norm{\SketchingOperatorProb(\Prob)-\SketchingOperatorProb(\Prob')}_2^{2} = \frac{1}{\nMeasures} \sum_{j=1}^{\nMeasures} \abs{\Exp_{\Sample \sim \Prob} \rfeatj(\Sample)-
\Exp_{\Sample' \sim \Prob'} \rfeatj(\Sample')}^{2}.
\]
A characterization of the MMD that we will leverage throughout this section is that for any $\Prob,\Prob'$,
\[
\normkern{\Prob-\Prob'}^{2} = \Exp_{\freq \sim \freqdist} \abs{\Exp_{\Sample \sim \Prob} \rfeat(\Sample)-
\Exp_{\Sample' \sim \Prob'} \rfeat(\Sample')}^{2}.
\]

We observe that  $\SketchingOperatorProb$ satisfies the LRIP~\eqref{eq:lowerRIP_excess} (with $\eta=0$) for a given model set $\Model$ if, and only if,  the metric %
$\dnormloss{\Prob-\Prob'}{}$
is dominated by $\norm{\Prob-\Prob'}_{\kernel_{\SketchingOperator}}$ for $\Prob,\Prob' \in \Model$.
Our  overall strategy to check that a random feature function $\SketchingOperator$ defined by $\FClass$ and $\freqdist$ satisfies the LRIP~\eqref{eq:lowerRIP_excess} (with $\eta=0$) with controlled sketch dimension $\nMeasures$ will be to:
\begin{enumerate}
\item prove that the average kernel $\kernel$ defined by~\eqref{eq:DefIntegralRepresentation} satisfies the {\em Kernel LRIP}
\begin{equation}
\label{eq:KernelLRIP}
\normdloss{\mProb-\mProb'}{} \leq C_{\kernel} \norm{\mProb-\mProb'}_{\kappa},\quad \forall \mProb,\mProb' \in \Model;
\end{equation}
\item in the spirit of compressive sensing theory, use concentration of measure and covering arguments to show that for any $0<\delta<1$, for large enough $\nMeasures$, with high probability on the draw of $\freq_{j}$, 
\begin{equation}\label{eq:KernelRIPL2}
1-\delta 
\leq \frac{\norm{\SketchingOperatorProb(\mProb)-\SketchingOperatorProb(\mProb')}_2^{2}}{\norm{\mProb-\mProb'}_{\kappa}^{2}}
= \frac{\norm{\mProb-\mProb'}_{\kernel_{\SketchingOperator}}^{2}}{\norm{\mProb-\mProb'}_{\kappa}^{2}} \leq 1+\delta,\quad \forall \mProb,\mProb' \in \Model
\end{equation}
so that the kernel LRIP~\eqref{eq:KernelLRIP} actually holds with $\kappa_{\SketchingOperator}$ instead of $\kappa$ and constant $C_{\kernel_{\SketchingOperator}} := C_{\kernel}/\sqrt{1-\delta}$.
\end{enumerate}
\begin{remark}
The expression~\eqref{eq:KernelRIPL2} expresses the control of the \emph{relative error} of approximation \emph{of the MMD}, \emph{restricted} to certain distributions. This contrasts with state of the art results on random features 
\rev{that either control uniformly the approximation of the kernel $\abs{\kernel_{\SketchingOperator}(\cdot,\cdot)-\kernel(\cdot,\cdot)}$ \citep[see e.g.][]{Sriperumbudur:2015to} or the approximation of functions in the RKHS induced by $\kernel$ by those in the RKHS induced by $\kernel_{\SketchingOperator}$ \citep{Bach:2015ux}.} These types of controls are indeed of a different nature compared to ours, and none seems to be a direct consequence of the others.
\end{remark}

\subsection{Ingredients to verify the Lower Restricted Isometry Property}

In sight of the inequalities~\eqref{eq:KernelLRIP},\eqref{eq:KernelRIPL2} we need to prove, the analysis will focus on the so-called normalized secant set of the model $\Model$ with respect to the average kernel $\kernel$, defined as follows \citep[see, e.g. ][]{Dirksen:2014wl,puy:hal-01203614}: 
\begin{definition}[Normalized secant set] 
  The \emph{normalized secant set} of a model set $\Model$ with respect to a kernel $\kernel$ is the following subset of the set of finite, signed measures  (see Appendix~\ref{sec:FiniteSignedMeasures}) with appropriate integrability
\begin{equation}\label{eq:DefNormalizedSecantSet}
\secant_{\kernel} = \secant_\kernel(\Model) := \set{\frac{\mProb-\mProb'}{\normkern{\mProb-\mProb'}}: \mProb,\mProb'\in\Model, \normkern{\mProb-\mProb'}>0}.
\end{equation} 
\end{definition}
Using the secant set, the LRIP \eqref{eq:KernelRIPL2} is equivalent to
\begin{equation}\label{eq:KernelLRIP2reformulated}
\abs{ \norm{\SketchingOperatorProb(\HH)}^2_2 -1} \leq \delta,\quad \forall \mu \in \secant_\kernel
\end{equation}
The radius of $\secant_\kernel$ with respect to certain function norms will play an important role. Since this notion will come up repeatedly in the analysis, we introduce the following notation, which will be heavily used in the sequel.
Given a norm $\norm{\cdot}$ on measures, %
the radius of a subset $\mathcal{E}$ of finite signed measures is denoted
\begin{equation}\label{eq:DefSetRadius}
\norm{\mathcal{E}} := \sup_{\HH \in \mathcal{E}} \norm{\HH}.
\end{equation}
With these definitions we can observe that
\[
\sup_{\mProb,\mProb' \in \Model, \normkern{\mProb-\mProb'} > 0}  \frac{\normfclass{\mProb-\mProb'}{G}}{\normkern{\mProb-\mProb'}} = \normfclass{\secant_{\kernel}(\Model)}{G};
\]
where we recall that the metric $\normfclass{\cdot}{G}$ is defined in~\eqref{eq:DefFNorm}. In particular, the constant from~\eqref{eq:KernelLRIP} can be equivalently rewritten as $C_{\kernel} := \dnormloss{\secant_{\kernel}}{}$.

Concerning~\ref{eq:KernelRIPL2}, the strategy to establish it will rely on the following two
  quantities:
first, a \emph{concentration function} $\ConcFn(t)$ characterizing the pointwise (i.e. for fixed $\mProb,\mProb'\in \Model$) concentration of $\norm{\SketchingOperatorProb(\mProb)-\SketchingOperatorProb(\mProb')}_{2}^{2}$ around its expectation; secondly, certain covering numbers of $\secant_\kernel$ needed to step from pointwise to uniform concentration.

Classical arguments from compressive sensing \citep{Richard-Baraniuk:2008aa,Eftekhari:2013ti,puy:hal-01203614,Dirksen:2014wl,FouRau13} prove that certain random linear operators satisfy the RIP by relying on pointwise concentration inequalities.
Similarly, a first step to establish that the inequalities~\eqref{eq:KernelRIPL2} hold with high probability consists in assuming first a pointwise version of the same, i.e., for any choice of $\nMeasures$ in~\eqref{eq:DefRandomFeatureFunction}:
\begin{equation}\label{eq:PointwiseConcentrationFn}
  \text{for any } \HH \in \secant_\kernel : \qquad \mathbb{P}\left(
  \abs{ \norm{\SketchingOperatorProb(\HH)}^2_2 -1}
  \geq t\right)
  \leq 2 \exp\left(-\frac{\nMeasures}{\ConcFn(t)}  \right),
\end{equation}
for some \emph{concentration function} $t \mapsto \ConcFn(t)$ that should ideally be as small as possible.  The following result shows that the radius $\normfclass{\secant_{\kernel}}{F}$ can be used to control such a concentration function.
\begin{lemma}\label{le:PointwiseConcentrationLemma}
Consider a family of functions $\FClass := \{\rfeat\}_{\freq \in \Omega}$, $\nMeasures$ parameters $(\freq_j)_{j=1}^{m}$ drawn i.i.d. according to some distribution $\Lambda$ on $\Omega$, and $\SketchingOperatorProb$ the (random) operator induced (see~\eqref{eq:SketchingOperatorProbDef}) by the feature function
\(
\SketchingOperator(\sample) := \tfrac{1}{\sqrt{m}} \left(\rfeatj(\sample)\right)_{j=1}^{m}.
\)
Denoting $\kernel$ the associated average kernel (cf~\eqref{eq:DefIntegralRepresentation}) we have $\normfclass{\secant_{\kernel}}{F} \geq 1$. Moreover, if $\normfclass{\secant_{\kernel}}{F} < \infty$ then~\eqref{eq:PointwiseConcentrationFn} holds for all %
$\HH  \in \secant_\kernel$,
with
\begin{equation}\label{eq:ConcFnFromConcCst}
\ConcFn(t) \leq %
2t^{-2}(1+t/3) \cdot \normfclass{\secant_{\kernel}}{F}^{2},\qquad \forall t>0.
\end{equation}
\end{lemma}
The proof is in Appendix~\ref{sec:ProofThmMainLRIP}.
Observe that the above estimate only depends on the choice of the feature family $\FClass$, and holds for any feature sampling distribution $\freqdist$. More refined %
estimates for mixture models, exploiting
moments of $\freqdist$ rather than a uniform bound, are provided in the companion paper \citeparttwo\ and used to obtain concrete estimates for Compressive Clustering and Compressive GMM.
Finally, we will extrapolate pointwise concentration~\eqref{eq:PointwiseConcentrationFn} to all pairs $\mProb,\mProb' \in \Model$ using covering numbers of the normalized secant set %
with respect to an appropriate metric.
 
\begin{definition}[Covering number] \label{def:covnum}
  The \emph{covering number} $\covnum{d(\cdot,\cdot)}{S}{\coveps}$ of a set $S$ with respect to a (pseudo)metric\footnote{Further reminders on metrics, pseudometrics, and covering numbers are given in Appendix~\ref{sec:notations_definitions}.} $d(\cdot,\cdot)$ is the minimum number of closed balls of radius $\coveps$ with respect to $d(\cdot,\cdot)$ with centers in $S$ needed to cover $S$.
\end{definition}
As the normalized secant set is a subset of the infinite-dimensional space of finite-signed measures, it is not obvious when its covering numbers are finite. Controlling them can be nontrivial, yet this is feasible on a case by case basis as will be illustrated in the companion paper \citeparttwo.

Covering numbers and pointwise concentration can be then combined to give rise to the following result
(whose proof is in Appendix~\ref{sec:ProofThmMainLRIP}) where the logarithm of the covering numbers somehow captures an intrinsic dimension of the considered learning task:
\begin{theorem}\label{thm:mainLRIP}
  Consider $\FClass := \set{\rfeat}_{\freq \in \Omega}$ a family of functions, $\freqdist$ a probability distribution on $\Omega$, $\SketchingOperator$ the associated random feature function and $\kernel$ the corresponding average kernel. Consider the pseudometric on $\FClass$-integrable probability distributions\footnote{In fact, we consider the extension of $d_\SketchingOperator$ to finite,
    $\FClass$-integrable signed measures, see Appendix~\ref{sec:FiniteSignedMeasures}.} 
\begin{equation}\label{eq:DefYetAnotherMetric}
d_\FClass(\Prob,\Prob'):=\sup_{\freq\in \freqSpace}\abs{\abs{\Exp_{\Sample \sim \Prob}\rfeat(\Sample)}^2 - \abs{\Exp_{\Sample' \sim \Prob'}\rfeat(\Sample')}^2}.
\end{equation} 
Consider a model set $\Model$ and $\secant_{\kernel}=\secant_{\kernel}(\Model)$ its normalized secant set. Assume that $\secant_\kernel$ has finite covering numbers with respect to the pseudometric $d_\FClass$. For $0<\delta,\zeta<1$, if 
\begin{equation}
\nMeasures \geq \ConcFn(\delta/2) \cdot
\log\Big(2\covnum{d_\FClass}{\secant_\kernel}{\delta/2}/\probLevel\Big),
\end{equation}
then, with probability at least $1-\probLevel$ on the draw of $(\freq_{j})_{j=1}^{m}\stackrel{i.i.d.}{\sim}\Lambda$, the operator $\SketchingOperatorProb$ induced by $\SketchingOperator$ (cf \eqref{eq:DefRandomFeatureFunction} and \eqref{eq:SketchingOperatorProbDef}) satisfies 
\begin{equation}\label{eq:mainLRIPPureKernel}
1-\delta \leq 
\frac{\norm{\SketchingOperatorProb(\mProb)-\SketchingOperatorProb(\mProb')}_{2}^{2}}{\norm{\mProb-\mProb'}_{\kappa}^{2}}
\leq 1+\delta, \qquad \forall \mProb,\mProb' \in \Model.
\end{equation}
When~\eqref{eq:mainLRIPPureKernel} holds, the LRIP~\eqref{eq:lowerRIP_excess} with $\eta=0$ holds with constant 
$C_\SketchingOperatorProb :=\frac{\dnormloss{\secant_{\kernel}}{}}{\sqrt{1-\delta}}$ and $\eta = 0$.
\end{theorem}

\subsection{Summary and applications}
\begin{table}[ht!]
{\small
 \renewcommand{\arraystretch}{1.5}
\begin{tabular}{|p{0.15\textwidth}|p{0.15\textwidth}|p{0.26\textwidth}|p{0.35\textwidth}|}
\hline
 Task & 
 PCA & 
 $k$-med./means ($p=1$/$p=2$)  & 
 Gaussian Mixture Model.   \\
 \hline
\hline
 Hypothesis $\hyp$  & subspace & $k$ cluster centers &  mixture $\Prob_{\hyp}$ of $k$ Gaussians\\
& $h \subset \RR^{\sampleDim}$ & $c_1,\ldots,c_k \in \RR^{\sampleDim}$ & -means $c_{l} \in \RR^{\sampleDim}$ \\
& $\dim \hyp = k$ & & -covar. $\Cov_{l} \in \RR^{\sampleDim \times \sampleDim}$\\
& & & -mixture parameters $\alpha_{l}$\\
 \hline
 Loss  $\loss(\sample,\hyp) $  & $\norm{\sample-P_{\hyp} \sample}_{2}^{2}$& $\min_{1 \leq l \leq k} \norm{\sample-c_{l}}_{2}^{p}$& $-\log \Prob_\hyp(\sample)$\\
 \hline
 \hline
 Feature function   & quadratic polyn.  & weighted Fourier features  & Fourier features\\ 
 $\SketchingOperator(\sample)$ & $\tfrac{1}{\sqrt{m}}\left(\sample^{T}\mL_{j}\sample\right)_{j=1}^{\nMeasures}$ & $\tfrac{1}{\sqrt{m}}\left(\frac{e^{\jmath\freq_{j}^{T}\sample}}{w(\freq_{j})} \right)_{j=1}^{\nMeasures}$ & $\tfrac{1}{\sqrt{m}}\left(e^{\jmath\freq_{j}^{T}\sample}\right)_{j=1}^{\nMeasures}$\\
\hline
 Sampling law $\freqdist$ & $\mathbb{P}(\mL) \propto e^{-\norm{\mL}_{F}^{2}}$ & 
 $\mathbb{P}(\freq) \propto w^{2}(\freq) e^{ - \frac{s^{2}\norm{\freq}^2}{2}}$  & $\mathbb{P}(\freq) \propto e^{ - \frac{s^{2}\normmah{\freq}{\covar^{-1}}^2}{2}}$\\
 \hline
 Average kernel $\kernel(\sample,\sample')$ & 
 $\norm{\sample\sample^{T}-\sample'\sample'^{T}}_{F}^{2}$ &
 $\exp\left(-\tfrac{\norm{\sample-\sample'}_{2}^{2}}{2s^{2}}\right)$ &
 $\exp\left(-\tfrac{\normmah{\sample-\sample'}{\covar}^{2}}{2s^{2}}\right)$
 \\
 \hline
 \hline
Proxy  &  & & \\
$\proxyRisk(\hyp,\vy)$ & 
Low-rank\qquad recovery & $\displaystyle\min_{\alpha \in \Simplex_{\PCAdim-1}}
\|\sum_{l=1}^{k}\alpha_{l}\SketchingOperator(c_{l})-\vy\|_{2}$ & $\norm{\SketchingOperatorProb(\Prob_{\hyp})-\vy}_{2}$ \\
from ~\eqref{eq:GenericRiskProxy} & & & \\
 \hline
 Restrictions on & N/A & $\min_{c_l \neq c_{l'}}\norm{c_{l}-c_{l'}}_{2} \geq 2\sep$ & $\min_{c_l \neq c_{l'}}\normmah{c_{l}-c_{l'}}{\covar} \geq 2\sep$\\
hypothesis class  & & $\max_{l} \norm{c_{l}}_{2} \leq R$ & $\max_{l} \normmah{c_{l}}{\covar} \leq R$\\
$\HypClass$ when optim- & & $\sep := 4s\sqrt{\log(ek)}$ & $\sep := 4\sqrt{(2+s^{2})\log(ek)}$ \\
\-mizing proxy & & $w(\freq) := 1+\frac{s^{2}\norm{\freq}_{2}^{2}}{\sampleDim}$ & known covariance $\Cov_{l} = \Cov, \forall l$\\
\hline
Sketch size $\nMeasures$ %
 & 
 $\PCAdim d$ &  
 $\PCAdim^{2} \sampleDim \log(e\PCAdim\sampleDim  R/\sep)\log^{2} (ek)$ & 
$k^{2}\sampleDim \log(ekdR)\log^{2}(ek)$\\
      &&& when $s=\sqrt{d}$, cf  \citeparttwo (Table~1)
          for other values of $s$.\\
 \hline
\end{tabular}
\caption{%
Summary of the application of the framework on our three main examples (detailed in Section~\ref{sec:CompressivePCA} and the companion paper \citeparttwo)
in $\SampleSpace = \RR^{\sampleDim}$. $\Simplex_{\PCAdim-1}$ denotes the $(\PCAdim-1)$-dimensional simplex (i.e. the sphere with respect to the $\ell^1$-norm in the non-negative orthant of $\RR^{\PCAdim}$), and $\normmah{\sample}{\covar} = \sample^{T}\covar^{-1}\sample$ the Mahalanobis norm associated to the positive definite covariance matrix $\covar$. 
The order of the sketch size is indicated up to universal  numerical multiplicative factor and logarithmic dependencies on the parameters $\coveps$ and $\zeta$ from Theorem~\ref{thm:mainLRIP}. The displayed average kernels are up to a multiplicative constant. 
\label{tab:summary}}}
\end{table}

To briefly summarize the results in this section, in order to establish the LRIP property
with respect to a given model $\Model$ in the context
of a sketching operator $\SketchingOperator$ associated to a family of random features $\FClass$ and
feature sampling distribution $\Lambda$ we proceed as follows. After identifying the associated average kernel
\eqref{eq:DefMeanMapEmbedding}, the key quantities to estimate relative to the normalized
secant $\secant_\kernel(\Model)$ are its radius $\dnormloss{\secant_{\kernel}}{}$
(which serves as
a measure of compatibility between the kernel, the learning task, and the model set $\rev{\Model}$),
the pointwise concentration function $\ConcFn(.)$ from~\eqref{eq:PointwiseConcentrationFn}, and the covering numbers of $\secant_\kernel$
with respect to the distance $d_\FClass$ from~\eqref{eq:DefYetAnotherMetric}.

Even though the above ingredients and results may look quite abstract at this stage, we can turn them into concrete estimates on several examples. %
The resulting guarantees are summarized in  Table~\ref{tab:summary} for the examples developed in detail in the companion paper \citeparttwo
(compressive $k$-Means, $k$-medians and GMM).

The random sketching results developed in the present section can
also be used to revisit the illustrative PCA example from Section~\ref{sec:CompressivePCA}.
Namely, while we have directly lifted from existing literature the RIP property~\eqref{eq:sym_RIP} 
for random Gaussian sketching matrices applied to low-rank covariance matrices, the arguments used there to establish
this property follow in essence the canvas of this section (pointwise concentration of the random
operator to its average, then unifom concentration via appropriate covering number arguments).
In this context, the squared MMD with respect to the averaged kernel is precisely the Frobenius norm
between covariance matrices. The additional ingredient needed to
complete the analysis is to relate the PCA excess risk to
the Frobenius norm of differences of low rank matrices %
(see~\eqref{eq:bdivfrob} in the technical Appendix~\ref{sec:PCA_theory}, which can be reinterpreted as a bound on $\dnormloss{\secant_{\kernel}}{}$ in the PCA setting).

\section{Conclusion and perspectives}\label{sec:future}
The principle of compressive statistical learning is to learn from large-scale collections by first summarizing the collection into a sketch vector made of empirical (random) moments, before solving a nonlinear least squares problem. The main contribution of this paper is to set up a general mathematical framework for compressive statistical learning and to demonstrate on 
an example (compressive PCA)
that the excess risk of this procedure can be controlled, as well as the sketch size. 
The companion paper \citeparttwo\ completes the illustration of the framework by considering two more examples: compressive clustering and compressive Gaussian mixture estimation --- with fixed known covariance.

\paragraph{Sharpened estimates?} Our demonstration of the validity of the compressive statistical learning framework for certain tasks is, in a sense, qualitative, and we expect that many bounds and constants are sub-optimal. 
A number of non-sharp oracle inequalities have been established in the course of our endeavor. 
A particular question is to obtain more explicit and/or tighter control of the bias term $ \distIOPexgen_{\hyp^{\star}_{\Prob}}(\Prob,\ModelCT(\HypClass))$, %
and to understand whether Lemma~\ref{lem:LemmaBiasTerm}, which relates this bias term to the optimal risk, can be tightened and/or extended to other loss functions.
In the same vein, as fast convergence rates for the excess risk can be established for certain classical statistical learning tasks under appropriate conditions (see e.g. \citep{Levrard:2013ho} for the case of $k$-means), it is natural to wonder whether the same holds for compressive statistical learning.

\paragraph{Links with neural networks.}
From an algorithmic perspective, the sketching techniques we have explicitly characterized in this paper have a particular structure which is reminiscent of a one-layer (random) neural network with subsequent averaging over multiple data points.
Indeed, when the sketching function $\SketchingOperator$ corresponds to random Fourier features, its computation for a given vector $\sample$ involves first multiplication by the matrix $\mathbf{W} \in \RR^{\nMeasures \times \sampleDim}$ whose rows are the selected frequencies $\freq_{j} \in \RR^{\sampleDim}$, then pointwise application of the $e^{\jmath \cdot}$ nonlinearity. 
Here we consider random Fourier \emph{moments}, hence a subsequent \emph{averaging} operation is performed. As we have seen, this draws a link with the MMD, as is done e.g. in the so-called MMD-GANS \citep{Li2015, Binkowski2018}, where the so-called discriminator is a neural net trained to compute MMDs over batches of samples. 

This suggests that our analysis could help analyze the tradeoffs between reduction of the information flow (dimension reduction) across multiple layers of such networks and the preservation of statistical information \citep{Shwartz-Ziv2017}. For example, this could explain why the pooled output of a layer is rich enough to cluster the input patches. Given the focus on drastic dimension reduction, this seems very complementary to the work on the invertibility of deep networks and pooling representations with random Gaussian weights \citep{JoanBruna:2014vc,Giryes:2015vo,Gilbert:2017wy}. Finally, we mention the recent popularity of networks with random weights in statistical physics \citep{Gabrie2018} and in analyzing the initialization point of optimization algorithms with a kernel characterization \citep{Jacot2018, Bietti2019}, for which information-preservation (non-degeneracy during training) is also an essential feature.

\paragraph{Privacy-aware learning via sketching?}
The reader may have noticed that, while we have defined sketching in~\eqref{eq:GenericSketching} 
as the empirical average of (random) features $\SketchingOperator(\sample_{i})$ over the training collection (or in fact the training \emph{stream}), the essential feature of the sketching procedure is to provide a good empirical estimator of the sketch vector $\SketchingOperatorProb(\Prob) = \Exp_{\Sample \sim \Prob} \SketchingOperator(\Sample)$ of the underlying probability distribution. 
A consequence is that one can envision \emph{other sketching mechanisms}, in particular ones more compatible with privacy-preservation constraints \citep{Duchi:2012ux}. For example, one could average  $\SketchingOperator(\sample_i+\xi_{i})$, or $\SketchingOperator(\sample_i)+\xi_{i}$, or $\mathbf{D}_{i} \SketchingOperator(\sample_{i})$, etc., where $\xi_{i}$ is a heavy-tailed random vector drawn independently from $\sample_{i}$, and $\mathbf{D}_{i}$ is a diagonal ``masking'' matrix with random Bernoulli $\set{0,1}$ entries. An interesting perspective is to characterize such schemes in terms of tradeoffs between differential privacy  and ability to learn from the resulting sketch. Preliminary results in this direction have been recently achieved \cite{schellekens:hal-02060208,chatalic:hal-02496896}.

\paragraph{Recipes to design sketches for other learning tasks through kernel design?}
\begin{figure}[ht!]
\centering
\includegraphics[width=0.8\columnwidth]{./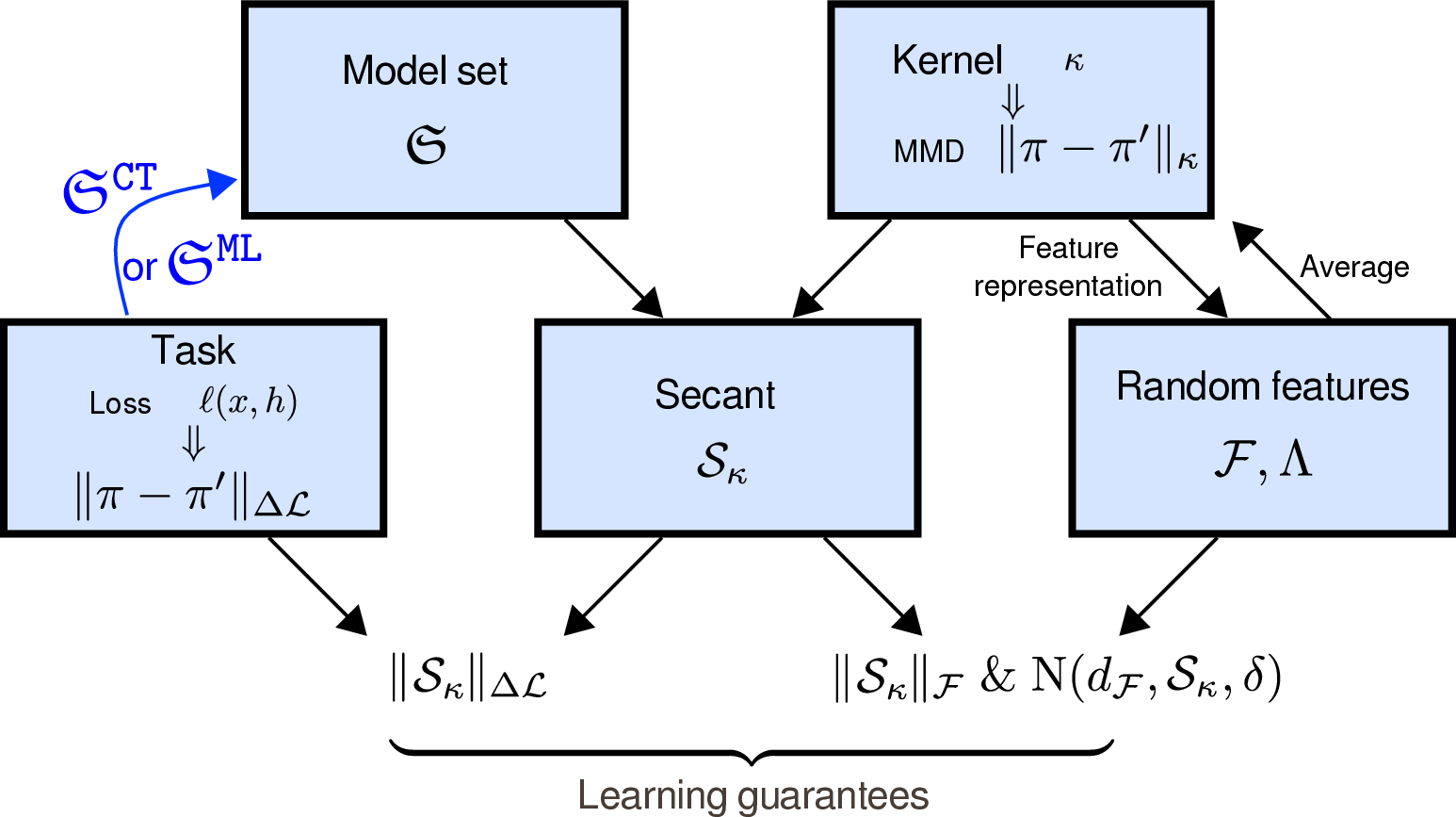}
\caption{A representation of the links between different concepts in this paper.}\label{fig:summary}
\end{figure}
Given the apparent genericity of the proposed compressive statistical learning framework, a particular challenge is to extend it beyond the learning tasks considered in this paper and its companion \citeparttwo. Kernel versions of these tasks (\emph{kernel} PCA, \emph{kernel} $k$-means, or \emph{spectral} clustering) appear as the most likely immediate extensions. They are expected to lead to sketching architectures reminiscent of \emph{two-layer} convolutional neural networks with additive pooling. 
Compressive supervised classification and compressive regression are also natural candidate tasks
in a learning setting, but seem more challenging.

Given a learning task, the main bottleneck is to find an adequate sketching function $\SketchingOperator(\cdot)$. 
As illustrated on Figure~\ref{fig:summary}, this primarily relies on the quest for a \emph{task-compatible kernel}, i.e., one satisfying the Kernel LRIP~\eqref{eq:KernelLRIP}. Subsequent technical steps would rely on the identification of an integral representation of this kernel using random features with the right concentration properties, and establishing that the associated secant set has finite covering dimension with respect to the feature-based metric~\eqref{eq:DefYetAnotherMetric}. On a case by case basis, one may have to identify the analog of the separation conditions apparently needed for compressive $k$-means, see \citeparttwo.

Vice-versa, one could wonder which family of learning tasks is compatible with a given kernel. In other words, how ``universal'' is a kernel, and how much can be learned from a single sketched representation of a database ? We expect that tasks such as compressive ranking, which involve pairs, triples, etc. of training samples, may require further extensions of the compressive statistical learning framework, to design sketches based on $U$-statistics rather than plain moments. These would lead to sketches linear in the product probability $\Prob \otimes \Prob$ instead of $\Prob$. The investigation of such extended scenarios is expected to benefit from analogies with the lifting techniques used in phaseless reconstruction, see e.g. \citep{Candes2013}.

\section*{Acknowledgements}

This work was supported in part by the European Research Council, PLEASE project (ERC-StG-2011-277906), the german DFG (FOR-1735 ``Structural inference in statistics'', SFB-1294 ``Data Assimilation''), the Franco-German University through the binational Doktorandenkolleg CDFA 01-18,  and the ANR (ANR-19-CHIA-0021-01, project BISCOTTE; ANR-19-CHIA-0009, project AllegroAssai).
R{\'e}mi Gribonval is very grateful to Michael E. Davies for many enlightening discussions around the idea of compressive statistical learning since this project started several years ago. The authors also wish to warmly thank Bernard Delyon and Adrien Saumard, as well as Gabriel Peyr{\'e} and Lorenzo Rosasco for their constructive feedback on early versions of this manuscript.

\pagebreak
\part*{Appendix}
\appendix
\section{Notations, definitions}\label{sec:notations_definitions}

In this section we group all notations and some useful classical results.

\subsection{Metrics and covering numbers}\label{sec:metricdefin}
\begin{definition} A \textbf{pseudometric} $d$ over a set $X$ satisfies all the axioms of a metric, except that $d(x,y)=0$ does not necessarily imply $x=y$. Similarly, a \textbf{semi-norm} $\norm{\cdot}$ over a vector space $X$ satisfies the axioms of a norm except that $\norm{x}=0$ does not necessarily imply $x=0$.
\end{definition}

\begin{definition}[Ball, $\coveps$-covering, Covering number]\label{def:covering}
Let $(X,d)$ be a pseudometric space. For any $\coveps >0$ and $x\in X$, we denote $\Ball_{X,d}(x,\coveps)$ the \textbf{closed ball} of radius $\coveps$ centered at the point $x$:
\[
\Ball_{X,d}(x,\coveps)=\left\lbrace y\in X,~d(x,y)\leq\coveps\right\rbrace.
\]
Let $Y \subseteq X$ be a subset of $X$. A subset $Z\subseteq Y$ is a \textbf{$\coveps$-covering} of $Y$ if $Y\subseteq\bigcup_{z\in Z} \Ball_{X,d}(z,\coveps)$. The \textbf{covering number} $\covnum{d}{Y}{\coveps} \in \mathbb{N}\cup\lbrace +\infty \rbrace$ is the smallest $k$ such that there exists a $\coveps$-covering of $Y$ made of $k$ elements $z_i \in Y$.
\end{definition}

\subsection{Finite signed measures} \label{sec:FiniteSignedMeasures}
The space $\FSMSpace$ of finite signed measures on the measurable sample space $(\SampleSpace,\mathfrak{Z})$ is a linear space that contains the set of probability distributions on $(\SampleSpace,\mathfrak{Z})$. By the Hahn-Jordan theorem, any finite signed measure $\HH \in \FSMSpace$ can be decomposed into a positive and a negative part, $\HH = \HH_{+}-\HH_{-}$, where both $\HH_{+}$ and $\HH_{-}$ are non-negative finite measures on $(\SampleSpace,\mathfrak{Z})$, hence $\HH_{+} = \alpha \Prob_{+}$ and $\HH_{-} = \beta \Prob_{-}$ for some probability distributions $\Prob_{+},\Prob_{-}$, and non-negative scalars $\alpha,\beta \geq 0$. A real-valued, measurable function $f$ on $(\SampleSpace,\mathfrak{Z})$ is said integrable with respect to $\HH$ when it is integrable both with respect to $\HH_+$ and $\HH_-$. Noticing that the expectation of an integrable function $f$ is linear in the considered probability distribution, we adopt the inner product notation for expectations:
\[
\inner{\Prob,f} := \Exp_{\Sample \sim \Prob} f(\Sample),
\]
being understood that we implicitly assume that $f$ is integrable with respect to $\Prob$ when
using this notation.
This extends to finite signed measures: given a decomposition of $\HH \in \FSMSpace$ as $\HH = \alpha \Prob-\beta \Prob'$ with $\Prob,\Prob'$ two probability distributions and $\alpha,\beta \geq 0$, we denote
\[
\inner{\HH,f} := \alpha \inner{\Prob,f}-\beta \inner{\Prob',f},
\]
which can be checked to be independent of the particular choice of decomposition of $\HH$. With these notations, given a class $\GClass$ of measurable functions $g: \SampleSpace \to \RR\ \text{or}\ \CC$ we can define 
\[
\normfclass{\HH}{G} := \sup_{f \in \GClass} \abs{\inner{\HH,f}},
\]
and check that this is a semi-norm on the linear subspace $\set{\HH \in \FSMSpace: \forall f \in \GClass, f \text{ integrable w.r.t. } \HH}$ as claimed when we introduced~\eqref{eq:DefFNorm}. 
Similarly, pseudometrics similar to~\eqref{eq:DefYetAnotherMetric} can be extended to finite signed measures as
\[
d_{\GClass}(\HH,\HH') := \sup_{f \in \GClass} \abs{\abs{\inner{\HH,f}}^{2}-\abs{\inner{\HH',f}}^{2}}.
\]
When the functions in $\GClass$ are smooth these quantities can be extended to tempered distributions.

The total variation norm is defined on $\FSMSpace$ as $\normTV{\cdot}=\normfclass{\cdot}{B}$ with $\mathcal{B} = \set{f: f\ \text{is continuous and}\ \norm{f}_{\infty} \leq 1}$ \citep[see e.g. ][]{Sriperumbudur2010} 
and yields a Banach structure on $\FSMSpace$~\citep[see e.g.][]{Halmos_2013}. 

The mean kernel $\kappa$ (cf~\eqref{eq:DefMeanMapEmbedding}) can naturally be extended from probability distributions to finite signed measures. Let $\HH_1,\HH_2\in \FSMSpace$ and $\Prob_1,\Prob_1',\Prob_2,\Prob_2', \alpha_1,\alpha_2,\beta_1,\beta_2$ such that $\HH_1= \alpha_1\Prob_1 -\beta_1 \Prob_1'$ and  $\HH_2= \alpha_2\Prob_2 -\beta_2 \Prob_2'$ (decompositions as differences of probability measures). Provided that $\kernel(\cdot,\cdot)$ is well-defined on the corresponding probability distributions, we can define 
\begin{equation}\label{eq:DefKernelMeas}
 \kernel(\mu_1,\mu_2) := \alpha_1 \alpha_2 \kernel(\pi_1,\pi_2) - \alpha_1 \beta_2 \kernel(\pi_1,\pi_2') -\beta_1 \alpha_2 \kernel(\pi_1',\pi_2) + \beta_1 \beta_2 \kernel(\pi_1',\pi_2'),
\end{equation}
which can be checked to be independent of the particular choices of decomposition.

By linearity of the integral and the definition of the kernel for probability distributions, we obtain a \emph{seminorm} $\normkern{\cdot}$ associated to the mean kernel:
\begin{equation}\label{eq:DefNormKern}
\normkern{\HH}^2:=\iint\kappa(\sample,\sample')d\HH(\sample)d\HH(\sample')=\kappa(\HH,\HH),
\end{equation}
that coincides with the metric of the mean kernel~\eqref{eq:DefMMD} for  probability distributions.

\section{Proof of  Theorem~\ref{thm:LRIPsuff_excess}}
\label{sec:ProofLRIPsuff_excess}

In this section, we start with a suitable generalization of \citep[Section IV-A]{Bourrier2014},
working with some relaxed assumptions on the considered metrics.

\begin{definition}[hemimetric]\label{defhemimetric}
A function $\divg(\cdot\|\cdot): \mathcal{X} \times \mathcal{X} \to \mathbb{R}$ is a {\em hemimetric} if
\begin{align*}
\divg(x\|x) &= 0,\quad \forall x \in \mathcal{X}\\
\divg(x\|y) &\geq 0,\quad \forall x,y \in \mathcal{X}\\
\divg(x\|y) & \leq  \divg(x\|z)+\divg(z\|y),\quad \forall x,y,z \in \mathcal{X}
\end{align*}
A hemimetric is a {\em pseudometric} if it is symmetric: $\divg(x\|y) = \divg(y\|x)$ for any $x,y \in \mathcal{X}$. 

Hemimetrics on basic sets such as $\mathcal{X} = \mathbb{R}^{n}$ will be denoted $\divg(x\|y)$. 
Hemimetrics between probability distributions will be denoted $\divp(\pi\|\pi')$.
The notation $\distp(x,y)$ will preferentially be used for pseudometrics on basic sets, while $D(\pi,\pi')$ will denote pseudometrics between probability distributions. 

\end{definition}

\begin{definition}[relaxed lower restricted isometry property (rLRIP)]
A function $\Psi: \mathcal{X} \to \mathcal{Y}$ satisfies the lower restricted isometry property on the subset $\Sigma \subset \mathcal{X}$ 
with respect to the hemimetric $\dX(\cdot\|\cdot)$ on $\mathcal{X}$ and the pseudometric $\dY(\cdot,\cdot)$ on $\mathcal{Y}$ with constant $\eta \geq 0$ iff
\begin{equation}\label{eq:rLRIP}
\dX(x\|x') \leq   \dY(\Psi(x),\Psi(x'))+\eta,\quad \forall x,x' \in \Sigma.
\end{equation}
\end{definition}
\begin{lemma}\label{le:IOP}
Assume that $\Psi$ satisfies the rLRIP on $\Sigma$ with respect to $\dX(\cdot\|\cdot)$ and $\dY(\cdot,\cdot)$ with constant $\eta$.
Consider $\varepsilon,\nu \geq 0$ and a decoder $\Delta: \mathcal{Y} \to \Sigma \subset \mathcal{X}$ such that%
\begin{equation}
\label{decoder} \dY(y,\Psi(\Delta(y))) \leq (1+\nu) \inf_{z \in \Sigma} \dY(y,\Psi(z))+\varepsilon,\quad \forall y \in \mathcal{Y}.
\end{equation}
Then $\Delta$ satisfies the instance optimality property: $\forall x^{*} \in \mathcal{X},y \in \mathcal{Y}$,
\begin{equation}\label{eq:IOPgen}
  \dX(x^
  {*}\|\Delta(y)) 
\leq \distIOPgen(x^* \| \Sigma)
+(2+\nu) \dY(y,\Psi(x^{*}))
+\eta+\varepsilon,
\end{equation}
where
\begin{equation}\label{eq:distIOPgen}
\distIOPgen(x \| \Sigma) = 
\inf_{z \in \Sigma} \distIOPgen(x\|z); \qquad
\distIOPgen(x\|z):=
\dX(x^{*}\|z) + (2+\nu) \dY(\Psi(x^{*}),\Psi(z)).
  \end{equation}
\end{lemma}
\begin{proof}The proof follows very closely \citep{Bourrier2014} and is adaptated to the fact that $\dX(\cdot\|\cdot)$ is a hemimetric.
Consider $x^{*} \in \mathcal{X}$, $y \in \mathcal{Y}$ %
and $\hat{x} = \Delta(y)$. Consider any $z \in \Sigma$ and write
\begin{eqnarray*}
\dX(x^{*}\|\hat{x}) 
&\leq& \dX(x^{*}\|z)+\dX(z\|\hat{x}) \\
&\stackrel{rLRIP}{\leq} & \dX(x^{*}\|z)+ \dY(\Psi(z),\Psi(\hat{x}))+\eta\\
&\leq& \dX(x^{*}\|z)+ \dY(\Psi(z),y)+ \dY(y,\Psi(\hat{x}))	+\eta\\
&\stackrel{\eqref{decoder}}{\leq}& \dX(x^{*}\|z)+(2+\nu) \dY(y,\Psi(z))+\eta+\varepsilon\\
& \leq& \dX(x^{*}\|z) +(2+\nu) \dY(y,\Psi(x^{*}))+ (2+\nu) \dY(\Psi(x^{*}),\Psi(z))+\eta+\varepsilon\\
& =& \dX(x^{*}\|z) + (2+\nu) \dY(\Psi(x^{*}),\Psi(z))+(2+\nu) \dY(y,\Psi(x^{*}))+\eta+\varepsilon
\end{eqnarray*}
As this holds for any $z \in \Sigma$, taking the infimum yields the result.
\end{proof}
\begin{remark} Conversely, when $\dX(\cdot\|\cdot)$ is a {\em pseudometric}, if {\em some} decoder satisfies~\eqref{eq:IOPgen} for each $x^{*} \in \mathcal{X},y \in \mathcal{Y}$ with  $\distIOPgen(\cdot\|\cdot)$ {\em some} function such that $\distIOPgen(x\|\Sigma) = 0$ for each $x \in \Sigma$, then the rLRIP~\eqref{eq:rLRIP} holds with constant $2(\eta+\varepsilon)$ with respect to $\wt{\dY}(\cdot,\cdot) = (2+\nu) \dY(\cdot,\cdot)$. 
Indeed, for $x,x' \in \Sigma$, as $\distIOPgen(x\|\Sigma) = \distIOPgen(x'\|\Sigma) = 0$, by~\eqref{eq:IOPgen} and the symmetry of $\dX(\cdot\|\cdot)$, we have with $y:=\Psi(x)$, $\hat{x} := \Delta(y)$:
\begin{eqnarray*}
\dX(x\|\hat{x}) & \leq & \distIOPgen(x\|\Sigma)+ (2+\nu)\dY(y,\Psi(x)) + \eta+\varepsilon = \eta+\varepsilon\\
\dX(\hat{x}\|x') = \dX(x'\|\hat{x}) & \leq & \distIOPgen(x'\|\Sigma)+ (2+\nu)\dY(y,\Psi(x')) + \eta+\varepsilon = \wt{\dY}(\Psi(x),\Psi(x'))+ \eta+\varepsilon.
\end{eqnarray*}
The triangle inequality yields $\dX(x\|x') \leq \dX(x\|\hat{x})+\dX(\hat{x}\|x') \leq \wt{\dY}(\Psi(x),\Psi(x')) + 2(\eta+\varepsilon)$.
\end{remark}
To prove Theorem~\ref{thm:LRIPsuff_excess}, given a fixed $\hyp_0 \in \HypClass$, we apply Lemma~\ref{le:IOP} with 
\[
\dX(\Prob\|\Prob'):=\divp_{\hyp_0}(\Prob\|\Prob'),\quad \Psi(\cdot) = \SketchingOperatorProb(\cdot),\quad 
  \text{and}\quad \dY(\cdot,\cdot):=C_{\SketchingOperatorProb}\norm{\cdot-\cdot}_{2}.
  \]
   By~\eqref{eq:ineqdivg}-\eqref{eq:lowerRIP_excess} the rLRIP~\eqref{eq:rLRIP} is satisfied by $\Psi$ on $\Sigma := \Model$. By~\eqref{eq:ThmDecoder2} $y := \vy = \SketchingOperatorProb(\empProb)$ and $\Delta(\vy) = \estProb$ satisfy~\eqref{decoder} with $\varepsilon$ replaced by $C_{\SketchingOperatorProb} \varepsilon$. Since the definition~\eqref{eq:distIOPgen} with %
  yields $\distIOPgen(\Prob\|\Sigma) = \distIOPexgen_{\hyp_0}(\Prob\|\Model)$ as in~\eqref{eq:DefMDist2}, we get by~\eqref{eq:IOPgen} with $x^{*} := \Prob$:
  \[
    \divp_{\hyp_0}(\Prob\|\estProb) \leq \distIOPexgen_{\hyp_0}(\Prob,\Model) + (2+\nu) C_{\SketchingOperatorProb} \norm{\SketchingOperatorProb(\empProb)-\SketchingOperatorProb(\Prob)}+\eta+ C_{\SketchingOperatorProb} \varepsilon.
\]
On the other hand, using~\eqref{eq:ThmLearn2}, we obtain
\[
  \drisk_{\hyp_0}(\estProb,\hat{\hyp}) = \Risk(\estProb,\hat{\hyp}) - \Risk(\estProb,\hyp_0)
  \leq \varepsilon',
\]
and thus, by definition of $\divp_{\hyp_0}(\Prob\|\estProb)$ (cf~\eqref{lossdiv}):
\begin{align}
  \drisk_{\hyp_0}(\Prob,\wt{\hyp}) 
    &
      \leq \drisk_{\hyp_0}(\Prob,\hat{\hyp})-\drisk_{\hyp_0}(\estProb,\hat{\hyp})+\varepsilon'
   \leq \divp_{\hyp_0}(\Prob\|\estProb)+\varepsilon'. \label{eq:main-erm-ineq2} 
\end{align}
\section{Proof of Lemma~\ref{le:PointwiseConcentrationLemma} and 
Theorem~\ref{thm:mainLRIP}}
\label{sec:ProofThmMainLRIP}
To establish Lemma~\ref{le:PointwiseConcentrationLemma} we use Bernstein's inequality
for bounded random variables, which is for example a consequence of \citealt[Corollary 2.10]{Massart:2007book}.
\begin{lemma}[Bernstein's inequality]\label{lem:bernstein}
Let $X_i \in \mathbb{R}$, $i=1,\ldots,N$ be $i.i.d.$ bounded random variables such that $\mathbb{E}X_i=0$, $|X_i| \leq M$ and $Var(X_i) \leq \sigma^2$ for all $i$'s. Then for all $t>0$ we have
\begin{equation}
P\left(\frac{1}{N}\sum_{i=1}^N X_i \geq t\right)\leq \exp\left(-\frac{Nt^2}{2\sigma^2+2Mt/3}\right).
\end{equation}
\end{lemma}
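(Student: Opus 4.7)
The plan is to derive the final statement directly from Proposition~\ref{pr:bernsteinmassart} by verifying that any bounded zero-mean random variable automatically satisfies the Bernstein moment conditions appearing there, with parameters $\tilde\sigma^{2}=\sigma^{2}$ and $u=M/3$. The first moment condition, $\Exp(X^{2})\leq \tilde\sigma^{2}$, is given by hypothesis, so the only real work is to verify the higher-moment condition for all integers $q\geq 3$.

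I would bound the positive-part moments by factoring $|X|^{q}=|X|^{q-2}\cdot X^{2}$ and using $|X|\leq M$ almost surely, giving
\[
\Exp\,(X)_{+}^{q}\leq \Exp\,|X|^{q}\leq M^{q-2}\,\Exp\,X^{2}\leq M^{q-2}\sigma^{2}.
\]
The target upper bound is $\tfrac{q!}{2}\sigma^{2}(M/3)^{q-2}$, so it suffices to check the purely numerical inequality $q!\geq 2\cdot 3^{q-2}$ for every integer $q\geq 3$. This holds with equality at $q=3$ (both sides equal $6$), and propagates by induction since $q+1\geq 3$ for $q\geq 3$. Substituting $\tilde\sigma^{2}=\sigma^{2}$ and $u=M/3$ into the exponential bound supplied by Proposition~\ref{pr:bernsteinmassart} then yields exactly
\[
\exp\!\left(-\tfrac{Nt^{2}}{2\sigma^{2}+2Mt/3}\right),
\]
which is the conclusion of Lemma~\ref{lem:bernstein}.

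I do not anticipate any obstacle. The only design choice is the value $u=M/3$: it is the largest constant making the Bernstein moment condition tight at $q=3$ (which is where the condition is most binding for bounded variables), so any larger choice would fail at $q=3$ and any smaller choice would weaken the exponential tail without gain. The derivation uses nothing beyond boundedness and the variance hypothesis, so there is no interaction with the rest of the compressive-learning apparatus developed earlier.
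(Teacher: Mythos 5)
Your derivation is correct and is exactly the verification the paper leaves implicit: the authors merely remark that this bounded version "can be easily checked" to follow from Proposition~\ref{pr:bernsteinmassart}, and your choice $u=M/3$ together with the bound $\Exp(X)_+^q\leq M^{q-2}\sigma^2\leq\tfrac{q!}{2}\sigma^2(M/3)^{q-2}$ (valid since $q!\geq 2\cdot 3^{q-2}$ for $q\geq 3$) is the intended check. One immaterial slip in your closing remark: the roles are reversed — a \emph{smaller} $u$ would violate the moment condition at $q=3$ while a \emph{larger} $u$ would satisfy it but weaken the tail bound.
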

\begin{proof}[Proof of Lemma~\ref{le:PointwiseConcentrationLemma}]
First, observe that for any $\FClass$-integrable probability distributions $\Prob,\Prob'$
\begin{align*}
\normkern{\Prob-\Prob'}^{2} 
&= \Exp_{\freq \sim \freqdist} 
\abs{\inner{\Prob,\rfeat}-\inner{\Prob',\rfeat}}^{2}
\leq \sup_{\freq \sim \freqdist} 
\abs{\inner{\Prob,\rfeat}-\inner{\Prob',\rfeat}}^{2} 
= \normfclass{\Prob-\Prob'}{\FClass}^{2}
\end{align*}
and that
\[
\frac{\norm{\SketchingOperatorProb(\Prob)-\SketchingOperatorProb(\Prob')}_{2}^{2}}{\normkern{\Prob-\Prob'}^{2}}-1
=
\frac{1}{m} \sum_{j=1}^{m} Z(\omega_{j})
\qquad \text{with} \qquad
Z(\omega)
:= 
\frac{
\abs{ \inner{\Prob,\rfeat}-\inner{\Prob',\rfeat}}^{2}
}{
\normkern{\Prob-\Prob'}^{2}
}-1
\]
Specializing to $\mProb,\mProb' \in \Model$ we get $1 \leq C := \normfclass{\mProb-\mProb'}{\FClass}/\normkern{\mProb-\mProb'} \leq \normfclass{\secant_{\kernel}}{F}$ and $-1\leq Z(\omega) \leq C^{2}-1$, hence $\normfclass{\secant_{\kernel}}{F} \geq 1$ and $|Z(\omega)|\leq \max\left(1,C^{2} -1\right) \leq C^{2} \leq \normfclass{\secant_{\kernel}}{F}^{2}$. As $\Exp_{\omega \sim \Lambda} Z(\omega) = 0$ we obtain
\begin{align*}
\mathtt{Var}_{\omega \sim \Lambda}(Z(\omega))
=& \mathtt{Var}_{\omega \sim \Lambda}\left(\tfrac{\abs{ \inner{\mProb-\mProb',\rfeat}}^{2}}{\normkern{\mProb-\mProb'}^{2}}\right)
\leq \frac{\Exp_{\omega \sim \Lambda}\abs{ \inner{\mProb-\mProb',\rfeat}}^{4}}{\normkern{\mProb-\mProb'}^{4}} \notag \\
\leq& \frac{\Exp_{\omega \sim \Lambda} \normfclass{\mProb-\mProb'}{\FClass}^{2} \cdot \abs{\inner{\mProb-\mProb',\rfeat}}^{2}}{\normkern{\mProb-\mProb'}^{4}} 
= \frac{\normfclass{\mProb-\mProb'}{\FClass}^2}{\normkern{\mProb-\mProb'}^2} = C^{2}.
\end{align*}
Applying Lemma~\ref{lem:bernstein} with the independent random variables $Z(\omega)$ we obtain for each $t>0$:
\[
\mathbb{P}
\left(
\abs{\frac{\norm{\SketchingOperatorProb(\mProb-\mProb')}_{2}^{2}}{\normkern{\mProb-\mProb'}^{2}}-1} \geq t
\right)
\leq 
2\exp\left(-\frac{m t^{2}}{2 C^{2} \cdot (1+t/3)}\right). \qedhere
\]
\end{proof}
\begin{lemma}\label{le:MainConcentrationLemmaAbstract}
Consider a family of functions $\FClass := \{\rfeat\}_{\freq \in \Omega}$, $\nMeasures$ parameters $(\freq_j)_{j=1}^{m}$ drawn i.i.d. according to some distribution $\Lambda$ on $\Omega$, and $\SketchingOperatorProb$ the (random) operator induced (see~\eqref{eq:SketchingOperatorProbDef}) by the feature function
\(
\SketchingOperator(\sample) := \tfrac{1}{\sqrt{m}} \left(\rfeatj(\sample)\right)_{j=1}^{m}.
\)
and $\kernel$ the associated kernel. 
Assume that~\eqref{eq:PointwiseConcentrationFn} holds with
concentration function $\ConcFn(t)$ 
and consider  $\secant \subset \secant_\kernel$ and $d_{\FClass}$ the metric defined in \eqref{eq:DefYetAnotherMetric}.
For any $\delta>0$ such that
\begin{equation}\label{eq:AssumeCoveringNumber}
N := \covnum{d_\FClass}{\secant}{\delta/2} < \infty,
\end{equation}
we have, with probability at least $1-2N \exp(-m / \ConcFn(\delta/2))$:
\begin{equation}\label{eq:MainRIP}
\sup_{\mu \in \secant} \abs{\norm{\SketchingOperatorProb(\mu)}_{2}^{2}-1} \leq \delta.
\end{equation}
\end{lemma}
\begin{proof}[Proof of Lemma~\ref{le:MainConcentrationLemmaAbstract}]
Consider $\mu = (\mProb-\mProb')/\normkern{\mProb-\mProb'}$ with $\mProb,\mProb' \in \Model$. By definition of the concentration function, for any $t>0$ and $\nMeasures \geq 1$
\begin{equation}\label{eq:PointwiseConcentrationAbstract}
\mathbb{P}
\left(
\abs{\norm{\SketchingOperatorProb(\mu)}_{2}^{2}-1} \geq t
\right)
\leq 
2\exp\left(-m / \ConcFn(t)\right).
\end{equation}
This establishes a pointwise concentration result when $\mu$ is on the normalized secant set $\secant_\kernel$. We now use a standard argument to extend this to a uniform result on $\secant$.
Let $\mu_{i}$, $1\leq i \leq N$ be the centers of a $\delta/2$-covering (with respect to the metric $d_{\FClass}$) of $\secant$. Using \eqref{eq:PointwiseConcentrationAbstract} with $t=\delta/2$, the probability that there is an index $i$ such that $\Big|\norm{\SketchingOperatorProb(\mu_i)}_{2}^{2}-1\Big| \geq \delta/2$ is at most $\probLevel = 2N \exp(-m / \ConcFn(\delta/2))$. Hence, with probability at least $1-\probLevel$, we have: for any $\mu \in \secant$, with $i$ an index chosen so that $d_{\FClass}(\mu,\mu_{i}) \leq \delta/2$:
\begin{align*}
\abs{\norm{\SketchingOperatorProb(\mu)}_{2}^{2}-1} &
\leq 
\abs{\norm{\SketchingOperatorProb(\mu)}_{2}^{2}-\norm{\SketchingOperatorProb(\mu_{i})}_{2}^{2}}
+
\abs{\norm{\SketchingOperatorProb(\mu_{i})}_{2}^{2}-1}\\
& \leq
\frac{1}{m}
\abs{
\sum_{j=1}^{m} 
\left(\abs{\inner{\mu,\rfeatj}}^{2}- \abs{\inner{\mu_{i}, \rfeatj}}^{2}\right)
}
+ \delta/2
 \leq
d_{\FClass}(\mu,\mu_{i}) + \delta/2 \leq \delta.\qedhere
\end{align*}
\end{proof}
\begin{proof}[Proof of Theorem~\ref{thm:mainLRIP}]
Denote $\probLevel = 2N \exp(-m / \ConcFn(\delta/2))$ with $N := \covnum{d_\FClass}{\secant_{\kernel}}{\delta/2}$. By Lemma~\ref{le:MainConcentrationLemmaAbstract}, the assumptions imply that with probability at least $1-\probLevel$ on the draw of $\omega_{j}$, $1 \leq j \leq m$, we have
\[
\sup_{\mu \in \secant_\kernel} \abs{\norm{\SketchingOperatorProb(\mu)}_{2}^{2}-1} \leq \delta.
\]
This implies~\eqref{eq:KernelRIPL2}. Since $\dnormloss{\secant_{\kernel}}{} < \infty$, the LRIP~\eqref{eq:lowerRIP_excess} holds wrt $\dnormloss{\cdot}{}$ with $C_\SketchingOperatorProb=\frac{\dnormloss{\secant_{\kernel}}{}}{\sqrt{1-\delta}}$. 
\end{proof}

\section{Proof of Lemma~\ref{lem:LemmaBiasTerm} and Lemma~\ref{lem:LemmaBiasTermBis}}\label{sec:proofbiasterm}

If $\Prob \in \ModelCT_\hyp$ then $0 = \Risk(\Prob,\hyp) = \Exp_{X \sim \Prob} \loss(X,\hyp) = \Exp_{X \sim \Prob} d^{p}(X,P_{\hyp}X)$ hence $d(X,P_{\hyp}X)=0$ almost surely, i.e., $X = P_{\hyp}X \in P_{\hyp} \SampleSpace = \mathcal{E}_{\hyp}$ almost surely. The converse is trivial. The bound~\eqref{eq:biasbound} follows directly since for any $h\in \HypClass$, $P_\hyp\Prob \in \ModelCT(\hyp) \subset \ModelCT(\HypClass)$.
This establishes the first claim of Lemma~\ref{lem:LemmaBiasTerm}.

Let $\hyp_0 \in \HypClass$ be fixed. By~\eqref{proj1}, with $Y \sim P_{\hyp_{0}}\Prob$, we have $P_{\hyp_{0}}Y = Y$ hence $\ell(Y,\hyp_{0}) = \divg^{p}(Y,P_{\hyp_{0}}Y) = 0$ and for
  any $\hyp \in \HypClass$:
\begin{align}
\drisk_{\hyp_{0}}(\Prob,\hyp)-\drisk_{\hyp_{0}}(P_{\hyp_{0}}\Prob,\hyp)
&=
\mathbb{E}_{X \sim \Prob} \ell(X,\hyp)-\mathbb{E}_{X \sim \Prob} \ell(X,\hyp_{0})
- \big(
    \underbrace{\mathbb{E}_{Y \sim P_{\hyp_{0}}\Prob} \ell(Y,\hyp)}_{
\mathbb{E}_{X \sim \Prob} \ell(P_{\hyp_{0}}X,\hyp)}-
    \underbrace{\mathbb{E}_{Y \sim P_{\hyp_{0}}\Prob} \ell(Y,\hyp_{0})}_0\big)
\notag \\
&=
\mathbb{E}_{X \sim \Prob} \left[\ell(X,\hyp)- \ell(X,\hyp_{0}) -\ell(P_{\hyp_{0}}X,\hyp)\right] \notag\\
&=
\mathbb{E}_{X \sim \Prob} \left[\divg^p(X,P_{\hyp}X)- \divg^p(X, P_{\hyp_{0}}X) -\divg^p(P_{\hyp_{0}}X , P_\hyp P_{\hyp_{0}}X) \label{eq:boundcomp1} \right].
\end{align}
For the second claim of Lemma~\ref{lem:LemmaBiasTerm}, by~\eqref{proj2}, since $d^{p}$ is a metric we have for any $x \in \SampleSpace$ 
\begin{align*}
\divg^{p}(x,P_{\hyp}x) 
& \leq  \divg^p(x, P_{\hyp}P_{\hyp_{0}}x) \leq  \divg^p(x,P_{\hyp_{0}}x)+\divg^p(P_{\hyp_{0}}x,P_{\hyp}P_{\hyp_{0}}x).
\end{align*}
It follows using~\eqref{eq:boundcomp1} that 
\[
\Delta \mathcal{R}_{\hyp_{0}}(\Prob,\hyp)-\Delta \mathcal{R}_{\hyp_{0}}(P_{\hyp_{0}}\Prob,\hyp) \leq 0.
\]
As this holds for any $\hyp$, and as equality is reached for $\hyp = \hyp_{0}$, we get $D_{\hyp_{0}}(\Prob\|P_{\hyp_0}\Prob) = 0$.\\
In particular when $p \in (0,1]$ we have $(a+b)^{p} \leq a^{p}+b^{p}$ for any $a,b \geq 0$ hence for $u,v,w \in \SampleSpace$, by the triangle inequality, $\divg^{p}(u,v) \leq  [\divg(u,w)+\divg(w,v)]^{p} \leq  \divg^p(u,w)+\divg^p(w,v)$, showing that $d^{p}$ is a metric.

For the claims of Lemma~\ref{lem:LemmaBiasTermBis}, we will exploit optimal transport through connections between the considered norms and the norm $\norm{\cdot}_{\LipClass(L,d)} = L\cdot \norm{\cdot}_{\LipClass(1,d)}$, where $\LipClass(L,d)$ denotes the class of functions $f: (\SampleSpace,d) \to \RR$ that are $L$-Lipschitz.

For $p\geq 1$, and $\SampleSpace$ with $d$-diameter bounded by $B$, since $\abs{a^{p}-b^{p}} \leq \max(p a^{p-1},pb^{p-1}) \abs{a-b}$ for any $a,b \geq 0$, we have
\[\abs{\loss(\sample,\hyp)-\loss(\sample',\hyp)} \leq pB^{p-1} \abs{d(\sample,P_\hyp\sample) - d(\sample',P_{\hyp}\sample')}\leq pB^{p-1}d(\sample,\sample'),\]
by the triangle inequality, hence $\LossClass(\HypClass) \subset \LipClass(pB^{p-1},d)$. 
Using \eqref{eq:ineqdivg} this 
implies that for any $\Prob,\Prob'$ we have in general in the above considered setting:
\[
\divp_{\hyp_0}(\Prob\|\Prob') \leq \dnormloss{\Prob - \Prob'}{} \leq 2 \normloss{\Prob-\Prob'}{} \leq 2p B^{p-1} \norm{\Prob-\Prob'}_{\LipClass(1,d)}.
\]
It is well-known that the 1-Wasserstein distance between two distributions can be equivalently
  defined in terms of optimal transport (so-called ``earth mover's distance'') but also as 
  \[
      \norm{\Prob-\Prob'}_{\textnormal{Wasserstein}_{1}(d)}
=  \norm{\Prob-\Prob'}_{\LipClass(1,d)}
   \]
as soon as $(\SampleSpace,d)$ is a separable metric space,  see, e.g., \citep[Theorem 11.8.2]{Dudley:2002ki}. 
By the transport characterization of the Wasserstein distance,  considering the transport plan that sends $\sample$ to $P_{\hyp} \sample$, where $\hyp \in \HypClass'$, we conclude 
\begin{equation}
  \label{eq:wastorisk1}
  \norm{\Prob-P_\hyp \Prob}_{\textnormal{Wasserstein}_{1}(d)}
   \leq \Exp_{\Sample \sim \Prob} d(\Sample,P_{\hyp}(\Sample))
   \leq \left[\Exp_{\Sample \sim \Prob} d^{p}(\Sample,P_{\hyp}(\Sample))\right]^{\tfrac{1}{p}} = \Risk(\Prob,\hyp)^{\frac{1}{p}},
 \end{equation}
by Jensen's inequality (since $p\geq 1$ here), yielding the claim~\eqref{eq:rboundgeneralcasepgeq1}.

For the final claim, we have
\[
  \norm{\SketchingOperatorProb(\Prob)-\SketchingOperatorProb(\Prob')}_2 
  = 
  \sup_{\norm{\vu}_{2} \leq 1} \abs{\inner{\SketchingOperatorProb(\Prob)-\SketchingOperatorProb(\Prob'),\vu}}
  = 
  \sup_{\norm{\vu}_{2} \leq 1} 
  \abs{E_{\Sample \sim \Prob} f_{\vu}(\Sample) -
    E_{\Sample \sim \Prob'} f_{\vu}(\Sample)}
\]
where $f_{\vu}(\sample) := \inner{\SketchingOperator(\sample),\vu}$. Moreover, for $\norm{\vu}_{2} \leq 1$ and any $\sample,\sample'$, since $\SketchingOperator$ is assumed $L$-Lipschitz:
\begin{align*}
  \abs{f_{\vu}(\sample)-f_{\vu}(\sample')}^2 =\inner{\SketchingOperator(\sample)-\SketchingOperator(\sample'),\vu}^2
  & \leq \norm{\SketchingOperator(\sample)-\SketchingOperator(\sample')}_{2}^2 \leq L^{2} d^{2}(\sample,\sample'),
\end{align*}
i.e., $f_{\vu}(\cdot)$ is $L$-Lipschitz with respect to $d(\cdot,\cdot)$.
It follows that for any $\Prob,\Prob'$, $\norm{\SketchingOperatorProb(\Prob)-\SketchingOperatorProb(\Prob')}_{2} \leq L \norm{\Prob-\Prob'}_{\LipClass(1,d)} = L\norm{\Prob-\Prob'}_{\textnormal{Wasserstein}_{1}(d)}$.  

The claim~\eqref{eq:skopboundpgeq1} when $p \geq 1$ follows by~\eqref{eq:wastorisk1}. 
When $p\leq 1$ and the space $\SampleSpace$ has $d$-diameter bounded by $B$, as $d(\Sample,P_{\hyp}\Sample) = 
d^{1-p}(\Sample,P_{\hyp}\Sample)d^{p}(\Sample,P_{\hyp}\Sample) \leq B^{1-p} d^{p}(\Sample,P_{\hyp}\Sample)$, we obtain~\eqref{eq:skopboundpleq1} as follows
\[
  \norm{\Prob-P_\hyp \Prob}_{\textnormal{Wasserstein}_{1}(d)}
\leq \Exp_{\Sample \sim \Prob} d(\Sample,P_{\hyp}(\Sample))
\leq B^{1-p} \Exp_{\Sample \sim \Prob} d(\Sample,P_{\hyp}(\Sample))^p=B^{1-p}\Risk(\Prob,\hyp).
\]

\section{Proof of Theorem~\ref{th:MainTheoremPCA} on Compressive PCA}
\label{sec:PCA_theory}
For Compressive PCA, we recall that $\PCAdim$ is the number of PCA components we want to estimate.
The hypothesis class $\HypClass$ is the set of linear subspaces of dimension $\PCAdim$ of the input space $\RR^\sampleDim$, which is in one-to-one correspondance with the space $\projspace_\PCAdim$ of orthoprojectors $\projP$ of rank $\PCAdim$.
In the remainder of this section we therefore use directly $\projspace_\PCAdim$ as the hypothesis class, for
notational convenience. We recall that for $r\geq \PCAdim$, we consider the model $\Model_r$ consisting of probability distributions having their second moment matrix of rank at most $r$.

Observe that for any $\projP \in \projspace_\PCAdim$:
\[
  \kPCArisk(\Prob,\projP):=\Exp_{\Sample \sim \Prob} \norm{\Sample-\projP \Sample}_{2}^{2}
  = \inner{\Cov_{\Prob},\mI-\projP}_F,\]
where the inner product $\inner{,}_F$ is the Frobenius product, and the minimum risk is 
\begin{equation}
\label{eq:PCAMinimumRisk}
\kPCArisk(\Prob,\optproj) = \inf_{\textrm{rank}(\mathbf{M}) \leq \PCAdim, \mathbf{M} \succcurlyeq 0} \norm{\Cov_\Prob-\mathbf{M}}_{\star} = \sum_{i>k} \lambda_i(\Cov_\Prob), %
\end{equation}
where $\optprojl$ ($1 \leq \ell \leq \sampleDim$) denotes an orthoprojector
onto the $\ell$ first eigenvectors of $\Cov_{\Prob}$, and $\eigv_i(\mathbf{M})$ denote the eigenvalues, with multiplicity and ordered in nonincreasing sequence, of a matrix $\mathbf{M}$.

We follow the improved risk analysis of Section~\ref{se:improvedanalysis}. In the above
  setting, the excess risk divergence~\eqref{lossdiv} with respect to an \emph{arbitrary} reference hypothesis $\projP_0 \in \projspace_\PCAdim$
  is given by
  \begin{equation}\label{eq:PCADiv}
    \divp_{\projP_0}(\Prob\|\Prob')  = \sup_{\mathbf{Q} \in \projspace_\PCAdim} \inner{\Cov_\Prob - \Cov_{\Prob'}, \projP_0-\mathbf{Q}}_F = \inner{\Cov_\Prob - \Cov_{\Prob'}, \projP_0}_F - \inf_{\mathbf{Q} \in \projspace_\PCAdim}
  \inner{\Cov_\Prob - \Cov_{\Prob'}, \mathbf{Q}}_F. 
    \end{equation}

\paragraph{Lower RIP.}
  We start with the following bound on the excess risk divergence, holding without restriction for any
  distributions $\Prob,\Prob'$ with existing second moments and any $\projP_0 \in \projspace_\PCAdim$:
  \begin{equation}
    \label{eq:bdivfrob}
  \divp_{\projP_{0}}(\Prob\|\Prob') \leq \sqrt{2\min(k,d-k)} \norm{\Cov_{\Prob} - \Cov_{\Prob'}}_F.
\end{equation}
\begin{proof}[Proof of~\eqref{eq:bdivfrob}]
By the so-called Ky Fan Theorem \cite{Fan:1949kz}, for a symmetric matrix $\mathbf{M} \in \RR^{\sampleDim \times \sampleDim}$, and
a positive integer $\ell \leq \sampleDim$, one has
\[
\sup_{\projP \in \projspace_\ell} \trace(\mathbf{M} \projP) = \sum_{i=1}^\ell \eigv_i(\mathbf{M})\,.
\]
As a result, we obtain
\begin{align*}
\divp_{\projP_{0}}(\Prob\|\Prob')
&=
   \sup_{\mathbf{Q} \in \projspace_\PCAdim} \inner{\Cov_{\Prob'}-\Cov_{\Prob},\mathbf{Q}}_F 
   + \inner{ \Cov_{\Prob}-\Cov_{\Prob'},\projP_{0}}_F\\
&\leq
\sum_{i=1}^{k} \lambda_{i}(\Cov_{\Prob'}-\Cov_{\Prob})+ \sum_{i=1}^{k} \lambda_{i}(\Cov_{\Prob}-\Cov_{\Prob'})
=
\sum_{i=1}^{k} \lambda_{i}(\Cov_{\Prob'}-\Cov_{\Prob}) - \sum_{i=d-k+1}^{d} \lambda_{i}(\Cov_{\Prob'}-\Cov_{\Prob})\\
&=
\sum_{i=1}^{\min(k,d-k)} \lambda_{i}(\Cov_{\Prob'}-\Cov_{\Prob}) - \sum_{i=d-\min(k,d-k)+1}^{d} \lambda_{i}(\Cov_{\Prob'}-\Cov_{\Prob})\\
  & \leq \sqrt{2\min(k,d-k)}\sqrt{\sum_{i=1}^d\lambda_{i}^{2}(\Cov_{\Prob'}-\Cov_{\Prob})} = \sqrt{2\min(k,d-k)} \norm{\Cov_{\Prob'}-\Cov_{\Prob}}_F.
\end{align*}
\end{proof}
Since the right-hand side of~\eqref{eq:bdivfrob} does not depend of $\projP_0$, we get from~\eqref{eq:ineqdivg} in particular that  for any $\Prob,\Prob'$ with finite second moments
\[
  \dnormloss{\Prob - \Prob'}{} \leq \sqrt{2k} \norm{\Cov_{\Prob'}-\Cov_{\Prob}}_F.
\]
Hence, since $\mathcal{M}$ is a linear operator having a RIP~\eqref{eq:sym_RIP} on
matrices of rank lower than $2r$, $\mathcal{M}$ induces (in the way described in Section~\ref{sec:CompressivePCA}) a sketching operator $\SketchingOperatorProb: \Prob \mapsto \SketchingOperatorProb(\Prob) := \mathcal{M}(\Cov_{\Prob})$ that has the lower RIP described in~\eqref{eq:lowerRIP_excess} with constants $ C_\SketchingOperatorProb = \frac{\sqrt{2\PCAdim}}{\sqrt{1-\delta}},\eta=0$ on model $\Model_r$.

\paragraph{Ideal decoder and generic excess risk control.} The ideal decoder~\eqref{eq:DefIdealDecoder} writes
\[
\Decoder[\vy]
:=\argmin_{\mProb \in \Model_{r}} \norm{\SketchingOperatorProb(\mProb)-\vy}_{2}^{2},
\]
 and is equivalent to~\eqref{eq:PCAIdealDecoder}, i.e.
  \[
    \hat{\Cov} :=\argmin_{\Cov: \textrm{rank}(\Cov) \leq r; \Cov \succcurlyeq 0} \norm{\mathcal{M}(\Cov)-\vy}_{2}^{2}.
  \]
  Formally, $\Decoder[\vy]$ can then be taken as any distribution having second
  moment matrix $\hat{\Cov}$. This last step can naturally be shunted since whatever the choice
  of such a representative distribution, the associated estimated hypothesis $\hat{\hyp}$ is directly given by~\eqref{eq:PCAIdealDecoderBestHyp}.

By Theorem~\ref{thm:LRIPsuff_excess}, this decoder is instance optimal yielding~\eqref{eq:MainBoundExcessRisk} with $\nu = \eta=\varepsilon=\varepsilon'=0$,
i.e., the excess risk of $\hat{\hyp}$  of the procedure of Section~\ref{sec:CompressivePCA}
when the true data distribution is $\Prob$ is controlled by 
\begin{equation}
  \label{eq:PCAbound1}
   \distIOPexgen_{\hyp^{\star}_{\Prob}}(\Prob,\Model_r) +
   2C_\SketchingOperatorProb \norm{\SketchingOperatorProb (\Prob)-\SketchingOperatorProb(\empProb)}_2
   =
   \distIOPexgen_{\hyp^{\star}_{\Prob}}(\Prob,\Model_r) +
   2C_\SketchingOperatorProb \norm{\mathcal{M} (\Cov_{\Prob}-\Cov_{\empProb})}_2,
   \end{equation}
with the bias term defined in~\eqref{eq:DefMDist2}.

\paragraph{Control of the bias term.}

The next lemma improves over~Lemma~\ref{lem:LemmaBiasTerm}  in the special case of PCA.
\begin{lemma}\label{lem:BiasPCA}
Consider a probability distribution $\Prob$ with finite second moments, $\projP^*:=\projP^{*[k]}_\Prob$, and $\mProb_r \in \Model_r$  (any) probability distribution with covariance $\Cov_\Prob^{[r]} := \projP^{*[r]} \Cov_\Prob \projP^{*[r]}$.
We have $\divp_{\projP^*}(\Prob\|\mProb_r) = 0$.
\end{lemma}
\begin{proof}
  Since $\Cov_\Prob - \Cov_\Prob^{[r]}$ is a nonnegative matrix and $r \geq k$, \eqref{eq:PCADiv} yields
\[
  \divp_{\projP^*}(\Prob\|\mProb_r) =
  \underbrace{\inner{\Cov_\Prob - \Cov^{[r]}_{\Prob}, \projP^* }_F}_{=0} -
  \inf_{\mathrm{Q} \in \projspace_k}\underbrace{
  \inner{\Cov_\Prob - \Cov^{[r]}_{\Prob}, \mathbf{Q}}_F}_{\geq 0} = 0. \qedhere
\]
\end{proof}
As a result the ``bias'' term in the bound~\eqref{eq:PCAbound1} is upper bounded as
\begin{equation}
  \label{eq:biasPCA}
   \distIOPexgen_{\hyp^{\star}_{\Prob}}(\Prob,\Model_r) =   \inf_{\mProb \in \Model_r} \paren{\divp_{\projP^*}(\Prob\|\mProb)
    + 2 C_\SketchingOperatorProb \norm{ \SketchingOperatorProb(\Prob)-\SketchingOperatorProb(\mProb)}_2} \leq 2
  C_\SketchingOperatorProb
  \norm{\mathcal{M}(\Cov_\Prob - \Cov_\Prob^{[r]})}_2.
\end{equation}

From now on we assume $d>k$ (otherwise the bias is trivially 0). We now use the following lemma to bound $\norm{\mathcal{M}(\Cov_\Prob - \Cov_\Prob^{[r]})}_2.$
  \begin{lemma}\label{le:Skeleton}
Let $\Cov \in \mathbb{R}^{d \times d}$ be symmetric p.s.d. and $\mathcal{M}: \mathbb{R}^{d \times d} \to \mathbb{R}^{m}$ satisfy the upper RIP in \eqref{eq:sym_RIP}, i.e. ${\norm{\mathcal{M}(\mathbf{M})}_{2}^{2}}/{\norm{\mathbf{M}}_{F}^{2}} \leq 1+\delta$ for all matrices of rank at most $2r$. For $1\leq \ell \leq d$, consider $\Cov^{[\ell]}$ a best rank $\ell$ approximation to $\Cov$. Then for any $1 \leq s \leq \min(\ell,2r)$ we have 
\[
\norm{\mathcal{M}(\Cov-\Cov^{[\ell]})}_{2} \leq \sqrt{1+\delta} \frac{\sigma_{\ell-s+1}}{\sqrt{s}},
\]
where $\sigma_{j} := \sigma_{j}(\Cov) := \sum_{i=j+1}^{d} \lambda_{i}(\Cov)$. 
\end{lemma}
\begin{proof}
  This proof follows mainly the ideas of~\cite{Candes:2008aa}.
  By definition of $\Cov^{[\ell]}$ there is an eigendecomposition $\Cov = U \Lambda U^{T}$, where $\Lambda$ is a diagonal matrix containing the eigenvalues of $\Cov$ with multiplicity in decreasing order, such that $\Cov^{[\ell]} = U \Lambda^{[\ell]}U^{T}$ where $\Lambda^{[\ell]}$ contains the first $\ell$ eigenvalues. Decompose $\Lambda$ into blocks, $\Lambda = \sum_{j \geq 0} \Lambda_{j}$, where $\Lambda_0$ contains the first $\ell-s$ eigenvalues, and $\Lambda_{j}$, $j \geq 1$ are the next blocks of $s$ eigenvalues
  in decreasing order (the last block is of size $\leq s$); that is, for $j\geq 1$ the block $\Lambda_{j}$ contains eigenvalues of indices
  $m$ such that $\ell + (j-2) s < m \leq \ell + (j-1)s$. Let us also denote $\Lambda_j^{+}$, for $j \geq 1$,
  the blocks of eigenvalues of size $s$ starting one index later,
  that is, $\Lambda_{j}$ contains eigenvalues of indices $m$ such that $\ell + (j-2) s +1 <  m \leq \ell + (j-1)s + 1 $.
  Let $S_{j} = U \Lambda_{j} U^{T}$, so that $\Cov-\Cov^{[\ell]} = \sum_{j \geq 2} S_{j}$. As $s \leq 2r$,  $\mathrm{rank}(S_{j}) \leq 2r$, $j \geq 2$ hence by the upper RIP property, 
\begin{align*}
\tfrac{1}{\sqrt{1+\delta}} \norm{\mathcal{M}(\Cov-\Cov^{[\ell]})}_{2}
& =  
\tfrac{1}{\sqrt{1+\delta}}\norm{\sum_{j \geq 2} \mathcal{M}(S_{j})}_{2}
\leq \tfrac{1}{\sqrt{1+\delta}}\sum_{j \geq 2} \norm{\mathcal{M}(S_{j})}_{2}
\leq \sum_{j \geq 2} \norm{S_{j}}_{F}\\
&= \sum_{j \geq 2} \norm{\Lambda_{j}}_{2}
\leq  \sum_{j \geq 2} \sqrt{s} \norm{\Lambda_{j}}_{\infty}
\leq  \sum_{j \geq 2} \sqrt{s} \frac{\norm{\Lambda_{j-1}^+}_{1}}{s}
= \sum_{j \geq 1} \frac{\|\Lambda_{j}^+\|_{1}}{\sqrt{s}} = \frac{\sigma_{\ell-s+1}}{\sqrt{s}}.
\end{align*}
\end{proof}
The above lemma (with $\ell:=r$)  allows us to control $\norm{\SketchingOperatorProb(\Prob)-\SketchingOperatorProb(\Prob_{r})} = \norm{\mathcal{M}(\Cov_{\Prob}-\Cov_{\Prob}^{[r]})}_{2}$ using $\sigma_{r-s+1}$ for $1 \leq s \leq r$.
This, in combination with~\eqref{eq:biasPCA} and~\eqref{eq:PCAbound1}, establishes~\eqref{eq:generalPCARiskBound}.
Moreover, choosing $s:=r-k+1$, we get $\sigma_{r-s+1} = \sigma_k=\kPCArisk(\Prob,\optproj)$,
leading to~\eqref{eq:MainPCARiskBound}.This proves Theorem~\ref{th:MainTheoremPCA}.

\twocolumn[\section*{Table of notations}]
\newcommand{\refone}[1]{\ref{#1}}
\newcommand{\eqrefone}[1]{\eqref{#1}}

\begin{supertabular}{|p{0.136\textwidth}p{0.313\textwidth}|}
\hline
$\sample \in \SampleSpace$ & sample and sample space\\
$\vy$ & sketch vector \eqrefone{eq:GenericSketching} \\
$\SketchingOperator$ & sketching function \eqrefone{eq:GenericSketching}, \eqrefone{eq:DefRandomFeatureFunction}, \\
$\SketchingOperatorProb$ & sketching operator \eqrefone{eq:SketchingOperatorProbDef} \\
\hline
$\Prob,\mProb$ & probabilities on sample space\\
$\HH$, $\nu$ & measures on sample space\\
$\inner{\Prob,f}$ &  $\Exp_{X \sim \Prob}f(X)$\\
$\inner{\HH,f}$ & $\int f(x) d\HH(x)$  (App. \refone{sec:FiniteSignedMeasures}) \\
\hline
$\hyp$ & hypothesis\\
$\HypClass$ & class of hypotheses \\
$\loss(\cdot,\hyp)$ & loss function  \\
$\Risk, \drisk_{\hyp}$ & risk \eqrefone{eq:BestHyp}, excess risk (Def. \refone{def:excessriskdiv}) \\
$\hyp^\star=\hyp^{\star}_{\Prob}$ & best hypothesis \eqrefone{eq:BestHyp} \\
$\proxyRisk$ & proxy for the risk \eqrefone{eq:DefRiskProxy}, \eqrefone{eq:GenericRiskProxy} \\
$\hat{\hyp}$ & learned hypothesis \eqrefone{eq:DefRiskProxy} \\
$P_{\hyp}$ & projection function for comp.-type task (Def. \refone{def:comptypetask}) \\
\hline
$\LossClass = \LossClass(\HypClass)$ & class of loss functions \eqrefone{eq:DefRiskNorm} \\
$\DLossClass = \DLossClass(\HypClass)$ & class of loss differences \eqrefone{eq:DefDLossClass} \\ 
\hline
$\FClass = \set{\rfeat}_{\freq \in \Omega}$ & class of features (Def. \refone{def:RFmap}) \\ 
$\freqdist$ & probability distribution of feature parameters $\freq$ (Def. \refone{def:RFmap}) \\
\hline
$\kernel(x,x')$ & psd kernel (Def. \refone{def:MMD}) \\
$\kernel(\Prob,\Prob')$ & kernel mean embedding \eqrefone{eq:DefMeanMapEmbedding} \\
$C_{\SketchingOperatorProb}$,$C_{\kernel}$, $C_{\kernel_{\SketchingOperator}}$ & kernel constants \eqrefone{eq:lowerRIP}; \eqrefone{eq:KernelLRIP}; \eqrefone{eq:KernelRIPL2} \\
\hline
$\normfclass{\HH}{G}$ & $\sup_{f \in \GClass} \abs{\inner{\HH,f}}$
(\eqrefone{eq:DefFNorm}, App. \refone{sec:FiniteSignedMeasures}) \\
$\normkern{\HH}$ & MMD norm \eqrefone{eq:DefMMD}, \eqrefone{eq:DefNormKern} \\
$\normloss{\cdot}{}$, $\dnormloss{\cdot}{}$ & task-driven norms \eqrefone{eq:DefRiskNorm}; \eqrefone{eq:ineqdivg} \\
\hline
$ \divp_{\hyp}(\Prob\|\Prob')$ & excess-risk divergence \eqrefone{lossdiv} \\
$\distIOPexgen_{\hyp}(\Prob,\Model)$, & bias term wrt. model \eqrefone{eq:DefMDist2} \\
$d_{\FClass}(\Prob,\Prob')$ & feature-based metric \eqrefone{eq:DefYetAnotherMetric} \\
\hline
$\Model$ & model set (of probabilities)\\
$\Model_{\hyp}$ & probabilities s.t. $\hyp$ optimal \eqrefone{eq:DefSimpleModelSetGivenHyp} \\
$\ModelCT_{\hyp}$, $\ModelCT(\HypClass)$ & compression-type model set \eqrefone{eq:DefLeastRestrictedModel} \\
$\ModelML_{\hyp}$, $\ModelML(\HypClass)$ & max. likelihood model set \eqrefone{eq:DefNaturalMLModel} \\
$\secant = \secant_{\kernel}(\Model)$ & normalized secant set \eqrefone{eq:DefNormalizedSecantSet}\\
\hline
$\norm{\mathcal{E}}$ & radius of a set of measures \eqrefone{eq:DefSetRadius} \\
$\ConcFn(t)$ & concentration function \eqrefone{eq:PointwiseConcentrationFn} \\
$\covnum{\norm{\cdot}}{A}{\eps}$ & covering numbers (Def. \refone{def:covnum}) \\
$\Ball$ & Closed ball (Def. \refone{def:covering}) \\ 
\hline
\end{supertabular}
 \onecolumn
 \pagebreak
\bibliographystyle{abbrvnat}
\bibliography{complearn_preprint}
\end{document}